\documentclass[11pt]{article}

\usepackage[letterpaper,top=1.15in,bottom=1.15in,left=1.25in,right=1.25in]{geometry}

\usepackage[T1]{fontenc}
\usepackage[utf8]{inputenc}
\usepackage{libertinus}          
\usepackage{libertinust1math}
\usepackage{microtype}

\usepackage{amsmath,amssymb,amsthm,amsfonts}

\usepackage{graphicx}
\usepackage[table,dvipsnames]{xcolor}
\usepackage{booktabs}
\usepackage{array}
\usepackage{colortbl}
\usepackage{caption}
\usepackage{newfloat}
\usepackage{listings}
\usepackage[most]{tcolorbox}
\usepackage{algorithm}
\usepackage{algorithmic}
\usepackage{enumitem}
\usepackage[hyphens]{url}
\usepackage{natbib}

\renewcommand{\cite}{\citep}  
\usepackage{titlesec}
\usepackage{fancyhdr}

\definecolor{ink}{HTML}{1A1A1A}         
\definecolor{accent}{HTML}{A6522C}      
\definecolor{accentdeep}{HTML}{7A3C1F}
\definecolor{cream}{HTML}{F6F3EC}       
\definecolor{creamline}{HTML}{DDD5C6}   
\definecolor{slate}{HTML}{2F5D75}       
\definecolor{slatelt}{HTML}{4A7B93}     
\definecolor{tblhead}{HTML}{EFEBE2}     
\definecolor{rulegrey}{HTML}{9A9384}

\color{ink}

\renewcommand{\arraystretch}{1.25}
\titleformat{\section}
  {\normalfont\sffamily\large\bfseries\color{ink}}
  {\color{accent}\thesection}{0.7em}{}
\titleformat{\subsection}
  {\normalfont\sffamily\normalsize\bfseries\color{ink}}
  {\color{accent}\thesubsection}{0.6em}{}
\titleformat{\subsubsection}
  {\normalfont\sffamily\small\bfseries\color{ink}}
  {\color{accent}\thesubsubsection}{0.5em}{}
\titlespacing*{\section}{0pt}{1.9ex plus .3ex}{0.9ex}
\titlespacing*{\subsection}{0pt}{1.5ex plus .2ex}{0.7ex}
\titleformat{\paragraph}[runin]
  {\normalfont\bfseries\color{ink}}{}{0pt}{}[.]
\titlespacing*{\paragraph}{0pt}{1.3ex plus .25ex}{0.6em}
\DeclareCaptionStyle{ruled}{labelfont=normalfont,labelsep=colon,strut=off}
\floatstyle{ruled}
\newfloat{listing}{tb}{lst}{}
\floatname{listing}{Listing}

\newtcblisting{promptbox}[2][]{%
    enhanced, breakable, width=\textwidth,
    listing only, listing engine=listings,
    title={#2}, title after break={#2 (continued)},
    colback=cream, colframe=creamline,
    colbacktitle=accent, coltitle=white,
    fonttitle=\bfseries\footnotesize\sffamily,
    boxrule=0.5pt, arc=2pt, left=2.5mm, right=2.5mm, top=1.8mm, bottom=1.8mm,
    listing options={basicstyle=\ttfamily\scriptsize, breaklines=true,
      columns=fullflexible, keepspaces=true, showstringspaces=false, numbers=none},
    #1}

\newcommand{\widetab}[1]{\noindent\makebox[\textwidth][c]{#1}}

\renewenvironment{abstract}
  {\begin{tcolorbox}[enhanced, breakable, colback=cream, colframe=cream,
      borderline west={2.5pt}{0pt}{accent}, arc=1pt,
      left=5mm, right=5mm, top=3.5mm, bottom=3.5mm, boxrule=0pt]
   \small
   {\sffamily\bfseries\color{accentdeep}Abstract}\par\vspace{0.6ex}}
  {\end{tcolorbox}\vspace{1ex}}

\newtheorem{proposition}{Proposition}

\newcommand{\appendixbanner}[2]{%
  \clearpage
  \begin{tcolorbox}[enhanced, colback=cream, colframe=creamline,
      borderline west={3pt}{0pt}{accent}, arc=1pt, boxrule=0.5pt,
      left=5mm, right=5mm, top=4mm, bottom=4mm]
    {\sffamily\bfseries\Large\color{accentdeep}#1}\par\vspace{0.8ex}
    {\small #2}
  \end{tcolorbox}
  \vspace{1.5ex}}

\usepackage[colorlinks=true,linkcolor=slate,citecolor=slatelt,urlcolor=accentdeep,
            breaklinks=true,pdfborder={0 0 0}]{hyperref}
\hypersetup{
  pdftitle={Consilience: Conformally Calibrated Communication Control for Hidden-Profile Multi-Agent Reasoning},
  pdfauthor={Abhijith Babu, Ramneet Kaur, Vishal Pramanik, Olivera Kotevska, Nathaniel D. Bastian, Susmit Jha, Sunny Raj, Yanzhao Wu, Sumit Kumar Jha, Anirban Roy},
  pdfkeywords={multi-agent LLM, hidden profile, conformal prediction, communication control}}
\usepackage[nameinlink]{cleveref}

\begin{document}

\begin{center}
  {\sffamily\bfseries\Large\color{ink}
   Consilience: Conformally Calibrated Communication Control\\[0.4em]
   for Hidden-Profile Multi-Agent Reasoning\par}
  \vspace{1.1em}
  {\normalsize
   Abhi{}jith Babu\textsuperscript{1},\;
   Ramneet Kaur\textsuperscript{2},\;
   Vishal Pramanik\textsuperscript{3},\;
   Olivera Kotevska\textsuperscript{4},\;
   Nathaniel D. Bastian\textsuperscript{5}\\[0.4em]
   Susmit Jha\textsuperscript{2},\;
   Sunny Raj\textsuperscript{6},\;
   Yanzhao Wu\textsuperscript{1},\;
   Sumit Kumar Jha\textsuperscript{3,*},\;
   Anirban Roy\textsuperscript{2}\par}
  \vspace{0.9em}
  {\small\color{ink!75}
   \textsuperscript{1}Knight Foundation School of Computing and Information Sciences, Florida International University\\
   \textsuperscript{2}SRI International\quad
   \textsuperscript{3}Department of Computer \& Information Science \& Engineering, University of Florida\\
   \textsuperscript{4}Oak Ridge National Laboratory\quad
   \textsuperscript{5}Army Cyber Institute, United States Military Academy\\
   \textsuperscript{6}Department of Computer Science and Engineering, Oakland University\\[0.5em]
   \textsuperscript{*}Corresponding author: \texttt{sumit.jha@ufl.edu}\par}
  \vspace{1.0em}
  {\color{creamline}\rule{\textwidth}{1.2pt}}
\end{center}
\vspace{0.6em}

\begin{abstract}

Multi-agent LLM systems can improve reasoning by pooling diverse perspectives, but their effectiveness depends on coordinating communication, particularly in hidden-profile settings where each agent holds only part of the evidence required for a correct decision. Existing protocols, including fixed schedules, round-robin exchange, and unstructured debate, provide no guarantee that a conversational action is appropriate. We propose \textit{Consilience}, an inference-time orchestration framework that both steers and certifies multi-agent communication under distributed private information. At each turn, Consilience summarizes the discussion using a compact state capturing uncertainty, disagreement, evidence gain, redundancy, and premature consensus, then selects both a communication intervention (challenge, clarify, seek evidence, or route) and an appropriate speaker. Its central contribution is a round-wise conformal calibration procedure that provides a distribution-free, finite-sample guarantee: at each discussion round, conditional on reaching that round, the one-step regret of a controller's proposed action is bounded by a calibrated threshold with marginal probability at least $1-\alpha$; an acceptance mechanism enforces the same guarantee for the executed action by replacing inadmissible proposals. On HiddenBench-style hidden-profile tasks spanning 12 open and closed weight language models, Consilience improves decision accuracy and communication efficiency over fixed and unstructured discussion protocols, sometimes surpassing a full-information baseline where every agent observes all evidence. These results demonstrate that certified adaptive communication control can be more valuable than increasing information availability, providing a practical mechanism for reliable multi-agent LLM coordination.

\end{abstract}

\section{Introduction}
\label{sec:introduction}

Large language models (LLMs) are increasingly deployed as multi-agent systems in which several agents propose solutions, exchange arguments, critique one another, and jointly produce a decision \cite{du2023improving,liang2024encouraging,xiong2023examining,wu2024autogen,li2023camel}. Such systems have shown promise in mathematical reasoning, question answering, software development, planning, and tool-assisted problem solving. Their appeal stems from the idea that multiple agents can contribute complementary reasoning, identify errors overlooked by individual models, and produce more reliable collective decisions. However, recent studies question whether discussion alone consistently delivers these benefits. Multi-agent deliberation can duplicate the capabilities of a well-prompted single agent, amplify correlated errors, or converge prematurely on persuasive but incorrect conclusions \cite{wang2024rethinking,liang2024encouraging}. Simply increasing the number of agents or discussion rounds therefore does not guarantee better collective reasoning.

This limitation becomes especially consequential when task-relevant information is distributed across agents. Many collaborative decisions in medicine, intelligence analysis, and organizational planning involve participants who observe different pieces of evidence. No individual has access to the complete information required for a correct decision; the group succeeds only if its members communicate and integrate their complementary knowledge. This setting is captured by the \textit{hidden-profile paradigm}, originally introduced in social psychology to study information pooling in group decision making \cite{stasser1985pooling,stasser1988computer}. In a hidden profile, information shared by all group members supports a plausible but suboptimal alternative, whereas uniquely held information, when pooled across members, reveals the correct choice \cite{stasser1985pooling, lu2012twentyfive,schulz2012achieve}.

Although discussion should theoretically recover distributed evidence, human groups consistently exhibit a common-information bias: information shared by several members is more likely to be mentioned, repeated, and reinforced than information known to only one member \cite{stasser1985pooling,stasser1989information,gigone1993common}. As a result, discussion often reinforces initial preferences instead of correcting them. A meta-analysis of hidden-profile studies found that groups discussed substantially more common than unique information and were significantly less likely to identify the correct solution under hidden- than full-information conditions \cite{lu2012twentyfive}. Effective hidden-profile reasoning therefore requires more than allowing participants to speak: groups must surface unshared evidence, consider dissenting perspectives, revise premature beliefs, and determine when consensus is sufficiently supported to justify termination \cite{lu2012twentyfive,schulz2012achieve}.

Recent work shows that LLM-based groups reproduce many of these collective reasoning failures. HiddenBench~\citep{li2025hiddenbench} formalizes the hidden-profile paradigm for multi-agent LLM evaluation using 65 tasks derived from custom scenarios, prior human studies, and automatically generated decision problems. Each agent receives shared task information together with an asymmetric private clue, and the agents must deliberate before making a collective decision. Across 15 models from four language-model families, HiddenBench finds persistent failures to integrate distributed evidence: agents can collectively possess sufficient information yet still converge on an incorrect answer, and model scale or general reasoning strength does not reliably predict successful information pooling \cite{li2025hiddenbench}. HiddenBench, therefore, exposes a gap not directly addressed by the conventional reasoning benchmarks. The bottleneck is not only whether an LLM can reason from the evidence it observes, but whether a group can control its communication so that the right evidence is elicited, routed, and incorporated before consensus forms.

Existing multi-agent LLM methods provide only partial solutions. Debate frameworks improve reasoning through critique and competing arguments \cite{du2023improving,liang2024encouraging,xiong2023examining}; role-based frameworks coordinate specialized agents \cite{li2023camel,wu2024autogen,hong2024metagpt}; and coordination methods rely on predefined interaction topologies, voting, judges, or aggregation mechanisms \cite{chan2023chateval,chen2024agentverse,zhu2025multiagentbench}. These approaches work well when agents share most of the available information. Hidden-profile tasks, however, require adaptive communication control: the system must decide whether to elicit missing evidence, clarify disagreement, challenge unsupported conclusions, route the discussion, or terminate deliberation. Fixed round-robin protocols~\citep{li2025hiddenbench} cannot adapt to the evolving discussion state, while unconstrained LLM orchestrators may make unreliable control decisions.

Moreover, existing debate and orchestration methods generally provide no statistical guarantee that the communication action selected at a given state is appropriate. Such guarantees matter because communication decisions shape the information available to the group: selecting the wrong speaker, repeating shared evidence, or voting too early can reinforce an incorrect consensus, while unnecessary discussion increases cost and risks distraction or context degradation. Communication control should therefore be treated as a sequential decision problem rather than a fixed prompting protocol. At each round, the controller must use the agents' evolving beliefs and public transcript to select an intervention that improves the collective decision state while avoiding unnecessary communication. Our contributions are:

\begin{itemize}

\item \textbf{Closed-loop communication control.} We pose hidden-profile reasoning as a sequential communication-control problem and introduce Consilience, a framework that uses the evolving discussion state to select, conformally certify, and route communication interventions, enabling closed-loop coordination among LLM agents.

\item \textbf{Conformal reliability.} We propose a round-wise conformal calibration framework that certifies communication actions using one-step regret, providing finite-sample, distribution-free regret guarantees and replacing inadmissible proposals with conformally certified alternatives.

\item \textbf{Comprehensive evaluation.} We evaluate Consilience across 12 open- and closed-weight language models on HiddenBench~\cite{li2025hiddenbench} and an LLM-generated benchmark based on the GroupTravelBench task design~\cite{cheng2026grouptravelbench}. We compare against hidden-information, round-robin, and full-information baselines, while analyzing communication efficiency, conformal calibration, speaker routing, controller generalization, heterogeneous agents, and key ablations.

\end{itemize}

\begin{figure}[t]
\centering
\includegraphics[width=0.8\textwidth]{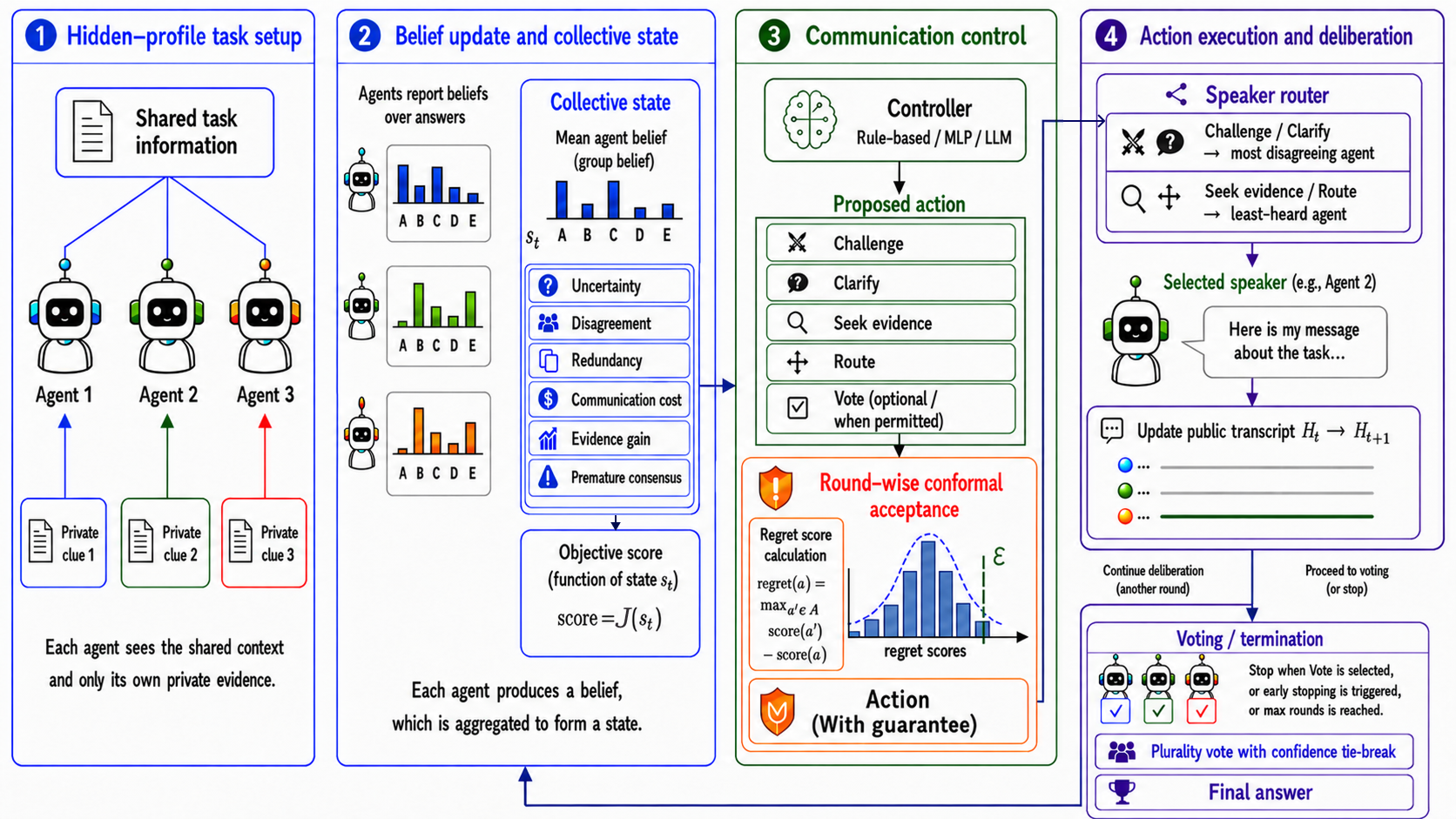} 
\caption{Proposed framework \textit{Consilience}: (1) Each agent receives shared task information and a private clue.
(2) Agents report beliefs conditioned on the public transcript, which
Consilience aggregates with transcript-level signals into the collective
state \(s_t\). (3) A controller proposes a communication action, and
round-wise conformal acceptance retains or replaces it using calibrated
one-step regret. (4) An action-conditional router selects the speaker,
whose message updates the transcript and agent beliefs. The loop repeats
until termination, followed by plurality voting with confidence-based
tie-breaking.}
\label{workflow}
\end{figure}

\section{Problem Formulation}
\label{sec:problem-formulation}

We consider hidden-profile multi-agent decision tasks in which the
information needed to identify the correct answer is distributed across
multiple agents. A task is a tuple
\(
(x,\mathcal{Y},I_{\mathrm{shared}},\{I_i\}_{i=1}^{N})
\),
where \(x\) is the task description,
\(\mathcal{Y}=\{y_1,\ldots,y_K\}\) is a finite answer set,
\(I_{\mathrm{shared}}\) is visible to every agent, and \(I_i\) is visible
only to agent \(i\). Thus, agent \(i\) initially observes
\((x,\mathcal{Y},I_{\mathrm{shared}},I_i)\), but not \(I_j\) for
\(j\neq i\).

The agents communicate through a shared public transcript
\(\mathcal{H}_t=(m_1,\ldots,m_t)\), with
\(\mathcal{H}_0=\varnothing\). Private evidence becomes available to the
group only when an agent communicates it on the public transcript. The goal is
to produce a collective answer \(\widehat{y}\in\mathcal{Y}\) while
eliciting and integrating the relevant private evidence with limited
communication. Consilience treats this process as closed-loop
communication control: the controller chooses what kind of action intervention
is needed, and a separate action-conditional router chooses which agent
should carry it out. Neither the controller nor the router answers the task itself or
generates evidence on an agent's behalf. The final answer is produced from the quorum among the LLM agents from their beliefs formed from the information on the public transcript.

\section{Consilience: Conformally Calibrated Communication for
Multi-Agent Systems}
\label{sec:methodology}

\subsection{Method Overview}
\label{sec:method-overview} Figure~\ref{workflow} summarizes the Consilience loop. 

\paragraph{Design principle.}
Let \(Y\in\mathcal{Y}\) denote the latent correct answer. An ideal
controller would choose the intervention that maximizes the expected
quality of the next collective belief while penalizing communication:
\begin{equation}
\label{eq:ideal-control}
a_t^\star
=
\arg\max_{a\in\mathcal{A}}
\mathbb{E}\!\left[
U(b_{t+1})-\lambda C(\mathcal{H}_{t+1})
\mid s_t,a
\right],
\end{equation}
where \(b_{t+1}\) is the collective posterior induced after the next
message, \(U(b)=-H(b)\) is negative posterior entropy, and
\(C(\mathcal{H})\) is communication cost. Exact optimization of
Equation~\eqref{eq:ideal-control} is intractable because each action can
induce many possible language-model responses and future discussion
trajectories. Consilience, therefore, uses an
observable collective state and a tractable one-step objective to propose
an intervention, round-wise conformal acceptance to screen that proposal,
and an action-conditional router to select its speaker. The resulting
message updates the public transcript and agent beliefs, closing the loop.
The following subsections formalize state construction, action proposal,
conformal acceptance, speaker routing, and termination.

\subsubsection{Stage 1: Belief Elicitation and State Construction}
\label{sec:collective-state}

After observing \(\mathcal{H}_t\), each agent reports
\[
p_i^{(t)} = \left[ p_i^{(t)}(y_1),\ldots,p_i^{(t)}(y_K) \right],
\qquad
\sum_{k=1}^{K}p_i^{(t)}(y_k)=1.
\]
This is the agent's reported belief conditional on
\((x,I_{\mathrm{shared}},I_i,\mathcal{H}_t)\); it is not assumed to be a
calibrated Bayesian posterior. The aggregate belief is the arithmetic
mean
\begin{equation}
\label{eq:aggregate-belief}
\bar p^{(t)} = \frac{1}{N}\sum_{i=1}^{N}p_i^{(t)}.
\end{equation}
Consilience represents the current discussion by
\begin{equation}
\label{eq:state-vector}
s_t = \left[ H_t,D_t,R_t, G_t, C_t, P_t \right],
\end{equation}
whose components are defined as:

\paragraph{Group uncertainty ($H_t$)}
We use the normalized entropy of the aggregate belief:
\[
H_t = -\frac{\sum_{k=1}^{K}\bar p^{(t)}(y_k)\log \bar p^{(t)}(y_k)}{\log K}.
\]
Thus, larger values of $H_t$ indicate a less decisive group belief.

\paragraph{Inter-agent disagreement ($D_t$)}
Aggregate confidence can conceal incompatible individual beliefs. We
measure disagreement by
\[
D_t = \frac{1}{N}\sum_{i=1}^{N}\operatorname{JSD}\!\left( p_i^{(t)}\parallel\bar p^{(t)} \right),
\]
where \(\operatorname{JSD}\) is the Jensen--Shannon divergence
\cite{lin2002divergence}. A large \(D_t\) indicates that some agents
interpret the available evidence differently from the group.

\paragraph{Message redundancy ($R_t$) and evidence gain ($G_t$)}
Let
\[
R_t = \phi_R(m_t,\mathcal{H}_{t-1})
\quad\text{and}\quad
G_t = \phi_G(m_t,\mathcal{H}_{t-1},\mathcal{Y}),
\]
where the fixed estimators \(\phi_R,\phi_G\in[0,1]\) are applied
identically across controllers. The redundancy estimator $\phi_R$ combines
lexical overlap with an LLM assessment of whether \(m_t\) repeats
previously public information. The evidence-gain estimator $\phi_G$ assesses
whether \(m_t\) contributes a previously unshared, decision-relevant
fact that supports or eliminates candidate answers. Higher \(R_t\)
indicates more repetition, whereas higher \(G_t\) indicates a more
informative contribution. We set \(R_0=G_0=0\).

\paragraph{Communication cost $(C_t)$}
We use normalized transcript length:
\[
C_t = \frac{\operatorname{Words}(\mathcal{H}_t)}{1000}.
\]

\paragraph{Premature consensus ($P_t)$}
Let
\(
p_t^\star=\max_{y\in\mathcal{Y}}\bar p^{(t)}(y)
\).
We distinguish supported agreement from early anchoring as
\[
P_t
=
p_t^\star(1-G_t)
\,
\mathbf{1}\!\left[
p_t^\star>0.70,\;
G_t<0.30,\;
t<K+2
\right].
\]
The score is nonzero when the group becomes highly confident early
in the discussion despite weak recent evidence.

\paragraph{Surrogate discussion loss.}
The state variables define
\begin{equation*}
\label{eq:surrogate-objective}
J(s_t)
=
\alpha_H H_t
+\alpha_D D_t
+\alpha_R R_t
-\alpha_G G_t
+\alpha_C C_t
+\alpha_P P_t.
\end{equation*}
Lower values are preferred. The coefficients are fixed design
hyperparameters, selected before evaluation and held constant across
controller variants and test tasks. The objective is a local control
signal, not a substitute for answer correctness. To summarize, Stage 1: the current transcript and private agent contexts
produce the observable state \(s_t\) and its scalar discussion loss
\(J(s_t)\). Stage 2 uses these to propose the next intervention.

\subsubsection{Stage 2: Communication-Action Proposal}
\label{sec:actions-regret}

\paragraph{Communication-action space}
Consilience uses four communication interventions:
\[
\mathcal{A}_{\mathrm{comm}}
=
\{
\textsc{Challenge},
\textsc{Clarify},
\textsc{SeekEvidence},
\textsc{Route}
\}.
\]
Controllers that can terminate adaptively additionally include
\textsc{Vote}. All controller variants use the same action definitions
and message templates.

\begin{itemize}
    \item \textsc{Challenge} asks the speaker to examine the current
    leading answer, identify unsupported assumptions, and provide
    potentially contradictory evidence.

    \item \textsc{Clarify} asks an agent with a divergent belief to explain the private evidence/interpretation for that disagreement.

    \item \textsc{SeekEvidence} requests one previously unshared,
    decision-relevant fact and an explanation of how it supports or
    eliminates candidate answers.

    \item \textsc{Route} is the evidence-sharing intervention.
    It asks an under-participating agent for concise discriminative
    facts.

    \item \textsc{Vote}, when available, terminates deliberation and
    invokes the final voting rule.
\end{itemize}

\textbf{Controller variants.}
Every controller maps {${s_t}$} from Stage 1 to a proposal action
\(
a_{t,0}=\pi_m(s_t)
\).
The routing and message-generation procedures remain fixed, so
controller comparisons isolate action proposal.

\paragraph{Rule-based controller}
The interpretable controller applies fixed thresholds to the state
variables. It selects \textsc{Challenge} when confidence is high but
recent evidence is weak, \textsc{Clarify} when disagreement is high,
\textsc{SeekEvidence} when communication is redundant or uninformative,
and \textsc{Route} otherwise. It permits voting only after a minimum
number of rounds, when the leading belief is sufficiently strong,
disagreement is low, and premature consensus is absent.

\paragraph{Learned controller}
Exhaustive training rollouts evaluate every admissible action from each encountered state using the Stage-3 counterfactual reward and record the vector
\[
\left[ r_t(s_t,a_1),\ldots,r_t(s_t,a_{|\mathcal{A}_{\mathrm{comm}}|}) \right].
\]
A multilayer perceptron
\(
f_\theta:\mathbb{R}^{6}\rightarrow
\mathbb{R}^{|\mathcal{A}_{\mathrm{comm}}|}
\)
with one 12-unit ReLU hidden layer is trained to predict this vector.
At inference time, it proposes
\[
\pi_{\mathrm{MLP}}(s_t) = \arg\max_{a\in\mathcal{A}_{\mathrm{comm}}} f_\theta(s_t)_a.
\]

\paragraph{LLM-based controller}
The LLM controller receives the task, candidate answers, public transcript, state vector, agent beliefs, and speaker counts, and selects one action from the admissible action space. A second variant includes \textsc{Vote}, enabling adaptive termination. Speaker selection remains separate. Regardless of controller type, Stage 2 outputs only the proposed action
\(a_{t,0}\). Stage 3 evaluates that proposal against the other
admissible actions and determines the action \(a_t\) that may actually
be executed.

\subsubsection{Stage 3: Counterfactual Evaluation and Conformal Acceptance}
\label{sec:conformal-control}

\paragraph{Conformal prediction}
\citet{vovk2005algorithmic} provide distribution-free, finite-sample
marginal coverage for exchangeable non-conformity scores on calibration datapoints $(Q_1,\ldots, Q_n)$ and the test input $Q_{test}$. Specifically, if
\(Q_1,\ldots,Q_n,Q_{\mathrm{test}}\) are exchangeable and are produced
by the same fixed scoring procedure, then, for
\(k=\lceil(n+1)(1-\alpha)\rceil\), and \(\epsilon\) as the \(k\)-th
smallest calibration score, we have the following probabilistic guarantees:
$\label{eq:split-conformal-background}
\mathbb{P}\!\left(Q_{\mathrm{test}}\leq\epsilon\right)
\geq 1-\alpha.$

\paragraph{Counterfactual action outcomes}
For a nonterminal action \(a\) applied at state \(s_t\), the fixed
Stage-4 router
selects a speaker, the speaker generates a message, all agents update
their beliefs, and Consilience obtains a candidate next state
\(s_{t+1}^{a}\). All candidate actions are evaluated independently from
the same frozen pre-action state; only the finally accepted branch is
committed to the public transcript. The one-step improvement is
$
\label{eq:action-reward}
r_t(s_t,a)
=
J(s_t)-J(s_{t+1}^{a}).$
We propose one-step regret as the non-conformity score:
\begin{align*}
\label{eq:action-regret}
\mathcal{R}_t(s_t,a)
=
\max_{a'\in\mathcal{A}_t^m(s_t)}
r_t(s_t,a')
-
r_t(s_t,a)
\\ =
J(s_{t+1}^{a})
-
\min_{a'\in\mathcal{A}_t^m(s_t)}
J(s_{t+1}^{a'}),
\end{align*}
where \(\mathcal{A}_t^m(s_t)\) is the set of actions admissible for
controller \(m\) at round \(t\). Hence
\(\mathcal{R}_t(s_t,a)\geq 0\), and at least one admissible action has
regret zero. Terminal actions, when available,
are evaluated using the same fixed terminal-state scoring rule in
calibration and testing.

\paragraph{Offline round-wise calibration}
For a fixed controller method \(m\), define the non-conformity score of
its proposal at round \(t\) as
\begin{equation}
\label{eq:nonconformity}
Q_t^m(s)
=
\mathcal{R}_t\!\left(s,\pi_m(s)\right).
\end{equation}
Calibration uses complete trajectories generated by the fixed
controller, routing policy, model, prompts, and score-generating
procedure. Because some trajectories terminate early, only calibration
trajectories that reach round \(t\) contribute to the round-\(t\)
multiset
\[
\mathcal{Q}_{\mathrm{cal}}^{m,(t)}
=
\left\{
Q_t^m(s_t^\tau):
\tau\in\mathcal{D}_{\mathrm{cal}}
\text{ reaches round }t
\right\}.
\]
Let \(n_t=|\mathcal{Q}_{\mathrm{cal}}^{m,(t)}|\) and $
k_t
=
\left\lceil(n_t+1)(1-\alpha)\right\rceil.
$
The threshold \(\epsilon_t^m\) is the
\(k_t\)-th smallest calibration score. 

\paragraph{Online conformal acceptance}
At test time, non-conformity score (or regret) for the controller's proposed action is computed, and if it is within the conformal threshold $\epsilon_t^m$, then it is passed to the router for stage 4. Otherwise, the near-optimal conformal action
set $\Gamma_t^m(s)$ is generated {by executing all possible actions}:
\begin{equation}
\label{eq:conformal-action-set}
\Gamma_t^m(s)
=
\left\{
a\in\mathcal{A}_t^m(s):
\mathcal{R}_t(s,a)\leq\epsilon_t^m
\right\}.
\end{equation}
Here, Consilience applies a fixed fallback
\(\operatorname{Select}\), which returns a minimum-regret member of
\(\Gamma_t^m(s)\), with ties resolved by a fixed action ordering:
\begin{equation}
\label{eq:fallback-select}
\operatorname{Select}(\Gamma_t^m(s)) \in \arg\min_{a\in\Gamma_t^m(s)}\mathcal{R}_t(s,a).
\end{equation}
The set is nonempty because a minimum-regret action has regret zero and
\(\epsilon_t^m\geq 0\).

Constructing either the calibration score in
Equation~\eqref{eq:nonconformity} or the test-time set in
Equation~\eqref{eq:conformal-action-set} requires evaluating every
admissible action from the same state. These evaluations are
counterfactual branches: they are used to calculate regret, and only the
accepted branch updates the actual transcript. Stage 3 therefore maps the proposal \(a_{t,0}\) to an accepted action
\(a_t\). The action has not yet changed the real transcript: Stage 4
first chooses the agent who will execute it.

\subsubsection{Stage 4: Action-Conditioned Routing and Transcript Update}
\label{sec:speaker-routing}

After conformal acceptance chooses \(a_t\), the deterministic router
selects speaker \(i_t=\rho(a_t,s_t)\). For disagreement-focused actions,
\begin{equation}
\label{eq:disagreement-routing}
\begin{aligned}
\rho (a_t,s_t) &= \arg \max_i \operatorname{JSD} \!\left (p_i^{(t)} \parallel \bar p^{(t)} \right ) , \\
a_t &\in \{\textsc{Challenge}, \textsc{Clarify}\} .
\end{aligned}
\end{equation}
The most disagreeing agent is the most likely to hold evidence or an
interpretation not yet incorporated into the group belief. For evidence-acquisition actions,
\begin{equation}
\label{eq:evidence-routing}
\rho(a_t,s_t)
=
\arg\min_i n_i^{(t)},
\\
a_t\in\{\textsc{SeekEvidence},\textsc{Route}\},
\end{equation}
where \(n_i^{(t)}\) is the number of messages previously contributed by
agent \(i\). The least-heard agent is used as a proxy for the agent most
likely to retain unshared evidence. Ties in
Equations~\eqref{eq:disagreement-routing}--\eqref{eq:evidence-routing}
are resolved by a fixed agent ordering.

The executed control is therefore
$
u_t=(a_t,i_t),
$ where conformal communication control determines \(a_t\) and the router
determines \(i_t\). The selected agent generates one action-conditioned
message
\[
m_{t+1}
\sim
\operatorname{LLM}_i
\!\left(
x,\mathcal{Y},I_{\mathrm{shared}},I_i,
\mathcal{H}_t,a_t
\right),
\]
which is appended to the public transcript. All agents subsequently
report \(p_i^{(t+1)}\), and Consilience recomputes \(s_{t+1}\). Thus,
if no stopping condition is met, Stage 4 closes the feedback
loop by returning the updated transcript to Stage 1.

\subsubsection{Stage 5: Termination and Collective Prediction}
\label{sec:termination}

Deliberation terminates when \textsc{Vote} is selected by a controller
that admits it, a fixed early-stopping condition is met, or the maximum
number of rounds \(T\) is reached. Each agent then independently reports
an answer \(\widehat y_i\in\mathcal{Y}\) and confidence
\(c_i\in[0,1]\). We define
\begin{equation*}
N(y)
=
\sum_{i=1}^{N}
\mathbf{1}[\widehat y_i=y], \text{ and}
\end{equation*}

\begin{equation*}
\mathcal{Y}_{\mathrm{tie}}
=
\left\{
y:N(y)=\max_{y'\in\mathcal{Y}}N(y')
\right\}.
\end{equation*}

\(\mathcal{Y}_{\mathrm{tie}}\) contains all the plurality winners.

The final prediction is
\[
\widehat y_{\mathrm{final}} = \arg\max_{y\in\mathcal{Y}_{\mathrm{tie}}}\sum_{i:\widehat y_i=y}c_i.
\]

\subsection{End-to-End Procedure and Theoretical Guarantee}
\label{sec:roundwise-guarantee}

The five stages above specify one online control round:
\[
\underbrace{(\mathcal{H}_t,\{I_i\}_{i=1}^{N})}_{\text{current information}}
\xrightarrow{\text{Stage 1}}
s_t
\xrightarrow{\text{Stage 2}}
a_{t,0}
\xrightarrow{\text{Stage 3}}
a_t
\xrightarrow{\text{Stage 4}}
\mathcal{H}_{t+1},
\]
followed by either another round or the Stage-5 prediction. The
calibration phase precedes online deliberation and supplies the
round-specific thresholds used in Stage 3.

\paragraph{Round-wise marginal guarantee.}
Here, we formalize the statistical guarantee for the Stage-3 acceptance rule based on the theory of conformal prediction framework.

Proposition~\ref{prop:roundwise-coverage} specializes this
construction to the round-specific one-step regret scores of a fixed
Consilience controller and gives the sharper finite-sample result for
almost surely distinct scores.

\begin{proposition}[Round-Wise Marginal Regret Coverage]
\label{prop:roundwise-coverage}
Fix a controller \(m\) and round \(t\). Let \(n_t\) calibration
trajectories reach round \(t\), with states
\(s_1,\ldots,s_{n_t}\), and let \(s_{\mathrm{test}}\) be the round-\(t\)
state of a new trajectory conditional on that trajectory reaching round
\(t\). Assume that, conditional on the realized calibration-cohort size
and on the test trajectory reaching round \(t\),
$
Q_t^m(s_1),\ldots,Q_t^m(s_{n_t}),
Q_t^m(s_{\mathrm{test}})
$
are exchangeable, and that \(\pi_m\) and all components of the
score-generating procedure were fixed independently of the calibration
sample. With \(k_t\) and \(\epsilon_t^m\) defined in Stage 3,
\begin{equation}
\label{eq:roundwise-coverage}
\mathbb{P}\!\left(
Q_t^m(s_{\mathrm{test}})\leq\epsilon_t^m
\;\middle|\;
\mathcal{T}_{\mathrm{test}}\text{ reaches }t,\,
N_t=n_t
\right)
\geq 1-\alpha.
\end{equation}
Equivalently,
\begin{equation*}
\label{eq:roundwise-set-coverage}
\mathbb{P}\!\left(
\pi_m(s_{\mathrm{test}})
\in\Gamma_t^m(s_{\mathrm{test}})
\;\middle|\;
\mathcal{T}_{\mathrm{test}}\text{ reaches }t,\,
N_t=n_t
\right)
\geq 1-\alpha.
\end{equation*}
If \(k_t\leq n_t\) and the pooled calibration and test scores
are almost surely distinct, the probability in
Equation~\eqref{eq:roundwise-coverage} equals
\(
k_t/(n_t+1)
\), and therefore lies in
\[
\left[
1-\alpha,\;
1-\alpha+\frac{1}{n_t+1}
\right).
\]
\label{prop:roundwise_coverage}
\end{proposition}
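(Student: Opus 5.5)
The plan is to reduce the statement to the standard split-conformal rank argument, applied within the conditional law given the event $E=\{\mathcal{T}_{\mathrm{test}}\text{ reaches }t,\,N_t=n_t\}$. Write $Q_j=Q_t^m(s_j)$ for $j=1,\ldots,n_t$ and $Q_{n_t+1}=Q_t^m(s_{\mathrm{test}})$. By hypothesis these $n_t+1$ scores are exchangeable conditional on $E$. Because $\pi_m$ and the score-generating procedure were fixed independently of the calibration sample, $\epsilon_t^m$ is a deterministic symmetric-position functional of $(Q_1,\ldots,Q_{n_t})$: it is the $k_t$-th order statistic, with the convention $\epsilon_t^m=+\infty$ when $k_t>n_t$. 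In that case the bound holds trivially, so we may assume $k_t\le n_t$.

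\textbf{Lower bound.} First I will note that $Q_{n_t+1}\le\epsilon_t^m$ holds whenever the rank of $Q_{n_t+1}$ among all $n_t+1$ pooled scores is at most $k_t$. Ranks are taken with ties broken in favour of the test point, or equivalently by a uniform random tie-break; I will check that ties can only help the event. Indeed, if at most $k_t-1$ calibration scores lie strictly below $Q_{n_t+1}$, then the $k_t$-th smallest calibration score is at least $Q_{n_t+1}$. Exchangeability, with random tie-breaking, makes the rank of the test score uniform on $\{1,\ldots,n_t+1\}$ conditional on $E$. Hence the probability is at least $k_t/(n_t+1)\ge 1-\alpha$, since $k_t=\lceil(n_t+1)(1-\alpha)\rceil$. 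This gives Equation~\eqref{eq:roundwise-coverage}.

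\textbf{Equivalent set form.} By definition~\eqref{eq:conformal-action-set}, $\pi_m(s)\in\Gamma_t^m(s)$ if and only if $\mathcal{R}_t(s,\pi_m(s))\le\epsilon_t^m$, which is exactly $Q_t^m(s)\le\epsilon_t^m$ by~\eqref{eq:nonconformity}. We also need $\pi_m(s)\in\mathcal{A}_t^m(s)$, which holds by construction of the admissible set. So the two events coincide pointwise, and the second display follows.

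\textbf{Exact value under distinct scores.} If the scores are almost surely distinct, the rank is exactly uniform with no tie-breaking needed. The event $\{Q_{n_t+1}\le\epsilon_t^m\}$ then equals the event that the test rank is at most $k_t$: if the test rank is $k_t+1$ or larger, at least $k_t$ calibration scores are strictly smaller, so the $k_t$-th calibration order statistic is strictly below the test score. The probability is therefore exactly $k_t/(n_t+1)$. Using $(n_t+1)(1-\alpha)\le k_t<(n_t+1)(1-\alpha)+1$ places it in $[1-\alpha,\,1-\alpha+1/(n_t+1))$.

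\textbf{Main obstacle.} The delicate step is not the rank computation but justifying that everything happens inside the conditional law given $E$. The calibration cohort size $N_t$ is random, because trajectories terminate at different rounds, and the scores depend on counterfactual branches. I will handle this by treating $E$ as fixed and using only the stated conditional exchangeability. On $E$ the multiset $\mathcal{Q}_{\mathrm{cal}}^{m,(t)}$ has exactly $n_t$ elements, so $k_t$ is a constant. The threshold is thus a fixed function of the calibration scores, and the argument above applies verbatim to the conditional measure. I do not need to derive exchangeability from more primitive trajectory-level assumptions, since the statement assumes it directly.
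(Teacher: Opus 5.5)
Your proposal is correct and follows the paper's proof step for step. You condition on $\mathcal{E}_t$, dispatch $k_t=n_t+1$ via $\epsilon_t^m=+\infty$, get the lower bound $k_t/(n_t+1)\ge 1-\alpha$ from the rank argument, read the set form off the definitions, and obtain the exact value $k_t/(n_t+1)$ with the ceiling bound under distinct scores. The one difference is that you spell out the order-statistic step the paper simply cites as standard; note that ``ties in favour of the test point'' and ``uniform random tie-break'' are not literally equivalent rankings, but your inclusion argument with the random tie-break is sound.
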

The interpretation and scope of Proposition~\ref{prop:roundwise_coverage} are in the Appendix.

\section{Experiments}

\begin{table}[t]
\centering
\small
\caption{Task and vote accuracy on HiddenBench~\cite{li2025hiddenbench} across the evaluated language models after conformal acceptance. Each entry reports task accuracy / vote accuracy. Hidden Pre, Hidden Post, and Full Info are reproduced from the baseline. Higher values are better. The best values are shown in bold. }
\label{tab:conformal-main}
\widetab{\begin{tabular}{lccccccc}
\toprule
\rowcolor{tblhead}
\textbf{Model} &
\textbf{Hidden Pre} &
\textbf{Hidden Post} &
\textbf{Full Info} &
\textbf{Rules (Ours) } &
\textbf{MLP (Ours) } &
\textbf{LLM (Ours) } &
\textbf{LLM+Vote (Ours) } \\
\midrule
Qwen3-0.6B   & .159/.202 & .185/.203 & \textbf{.359/.364} & .289/.309 & .267/.280 & .333/.331 & .222/.234 \\
Qwen3-1.7B   & .062/.095 & .092/.092 & .267/.285 & \textbf{.644/.629} & .578/.583 & .556/.571 & .556/.560 \\
Qwen3-4B     & .092/.095 & .097/.124 & .369/.406 & .733/.743 & .778/.800 & .778/.783 & \textbf{.844/.846} \\
Qwen3-8B     & .062/.126 & .149/.175 & .631/.596 & \textbf{.911/.926} & .867/.880 & .911/.903 & .867/.880 \\
Qwen3-14B    & .123/.161 & .267/.266 & .774/.752 & .889/.891 & .778/.794 & .867/.863 & \textbf{.933/.931} \\
Qwen3-32B    & .108/.141 & .282/.287 & .826/.817 & \textbf{.933/.937} & .889/.897 & .889/.886 & .867/.869 \\

Qwen2.5-32B  & .046/.087 & .297/.290 & .908/.881 & .911/.897 & .867/.863 & \textbf{.956/.960} & .933/.914 \\
Llama-3.1-8B & .077/.177 & .354/.354 & .610/.599 & .800/.743 & \textbf{.822/.811} & .711/.680 & .622/.623 \\

Phi-4        & .077/.150 & .195/.221 & .877/.819 & \textbf{.933/.914} & .822/.823 & .911/.897 & .911/.914 \\
Mistral-24B  & .072/.170 & .467/.485 & .908/.859 &\textbf{ .956/.943} & .867/.863 & .889/.880 & .867/.891 \\
DeepSeek-v4-Flash & .092/.185 & .384/.395 & .907/.881 & \textbf{.956/.947} & .844/.840 & .867/.863 & .955/.931 \\
DeepSeek-v4-Pro   & .077/.158 & .431/.474 & .954/.933 & \textbf{1.00/1.00} & .867/.874 & .956/.954 &\textbf{ 1.00/1.00} \\
\bottomrule
\end{tabular}}
\end{table}

\begin{table}[t]
\centering
\small
\setlength{\tabcolsep}{8pt}
\caption{Task accuracy and vote accuracy for the benchmark generated from GroupTravelBench \cite{cheng2026grouptravelbench} averaged across 12 models.}
\label{tab:mean-results}
\begin{tabular}{lcc}
\toprule
\rowcolor{tblhead}
Method & Task Accuracy & Vote Accuracy \\
\midrule
Hidden Pre     & 0.2537 & 0.2475 \\
Hidden Post    & 0.4982 & 0.4925 \\
Full Info       & 0.3657 & 0.3358 \\
Rules (Ours)          & 0.6544 & 0.6542 \\
MLP (Ours)            & 0.5071 & 0.5025 \\
LLM (Ours)            & 0.6150 & 0.6125 \\
\textbf{LLM+Vote (Ours) } & \textbf{0.6682} & \textbf{0.6642} \\
\bottomrule
\end{tabular}
\end{table}

We evaluate Consilience through four research questions: \textbf{RQ1} whether adaptive communication improves accuracy over no-discussion and round-robin baselines; \textbf{RQ2} whether distributed deliberation can match full-information reasoning; \textbf{RQ3} how controller choice, communication actions, routing, and termination affect performance; and \textbf{RQ4} whether learned coordination generalizes to unseen tasks and heterogeneous agent groups.

\paragraph{Benchmarks and models.}
We evaluate Consilience on the 65 hidden-profile decision problems in
HiddenBench~\citep{li2025hiddenbench} and on a separately generated
benchmark following the task structure of
GroupTravelBench~\citep{cheng2026grouptravelbench}. Each task distributes
the evidence required for the correct answer across agents, which
communicate only through the public transcript. Because the original
GroupTravelBench data are unavailable, our generated benchmark is not a
reproduction and its results are not directly comparable; generation
and validation details are in the Appendix. We test 12 instruction-tuned
open-weight models spanning multiple families and scales \cite{touvron2023llama,bai2023qwen,abdin2024phi, xu2026deepseek,jiang2023mistral}; model and
serving details, together with details on supplementary closed-model experiments,
are also provided in the Appendix.

\subsection{Compared Discussion Protocols}

We compare four Consilience controllers
(\textsc{Rules}, \textsc{MLP}, \textsc{LLM}, and
\textsc{LLM+Vote}) with three references:
\textsc{Hidden-Pre}, in which agents vote without communicating;
\textsc{Hidden-Post}, which uses fixed round-robin discussion; and
\textsc{Full-Info}, in which agents receive all evidence but do not
discuss. \textsc{Full-Info} is an information-rich empirical reference,
not a theoretical upper bound.

\subsection{Evaluation Metrics}

We report two complementary evaluation metrics computed from the agents' final votes.

\textbf{Vote accuracy.}
Vote accuracy measures the proportion of individual agent votes that match the ground-truth answer:
\[
\mathrm{VoteAcc} = \frac{\sum_{d\in\mathcal{D}}\sum_{i=1}^{N_d}\mathbb{I}[v_{d,i}=y_d]}
                        {\sum_{d\in\mathcal{D}}N_d},
\]
where $\mathcal{D}$ is the evaluation set, $N_d$ is the number of agents for task $d$, $v_{d,i}$ is the final vote of agent $i$, and $y_d$ is the correct answer.

\textbf{Task accuracy.}
Task accuracy measures whether the collective decision is correct. For each task, the final prediction is obtained by plurality voting,
\[
\hat{y}_d = \arg\max_{a\in\mathcal{A}_d}\sum_{i=1}^{N_d}\mathbb{I}[v_{d,i}=a],
\]
with ties broken using the summed confidence of supporting agents when available, and otherwise by the benchmark's deterministic ordering. Task accuracy is then
\[
\mathrm{TaskAcc} = \frac{1}{|\mathcal{D}|}\sum_{d\in\mathcal{D}}\mathbb{I}[\hat{y}_d=y_d].
\]

Task accuracy is our primary evaluation metric, while vote accuracy provides a complementary measure of how consistently individual agents converge to the correct answer.

\section{Results and Discussion}
\label{sec:results}

We provide the results of two benchmarks along with their analysis in this section. Detailed results are in the appendix.

\subsection{Overall Performance}
\label{sec:overall-results}

Table~\ref{tab:conformal-main} reports the accuracy of each discussion protocol
across the evaluated models. The hidden pre-discussion condition provides a
lower-information baseline in which agents cannot exchange their private
evidence. Fixed round-robin discussion improves information availability, but
does not explicitly control which evidence should be requested, which agent
should provide it, or when the discussion should terminate.

Across the evaluated models, Consilience consistently improves over these uncontrolled hidden-information protocols. The strongest controller obtains an average accuracy of \textbf{0.83}, compared with \textbf{0.26} for fixed round-robin discussion, corresponding to an absolute improvement of \textbf{0.57}. This result supports \textbf{RQ1:} the observed gains arise from adapting the communication intervention to the evolving collective state rather than from allowing agents to communicate.

\subsection{Can Communication Compensate for Distributed Information?}
\label{sec:full-information-comparison}

We next compare controlled hidden-information deliberation with the
full-information, no-discussion condition. This comparison separates two
potential sources of performance: access to evidence and the ability to
integrate that evidence effectively.

The best Consilience policy reaches or exceeds the full-information condition on \textbf{11 of the 12 }models. Averaged across models, its accuracy bettered the
full-information condition by \textbf{0.13 }. In almost all cases, controlled deliberation performs better despite no individual agent having direct access to all hidden facts.

This result should not be interpreted as exceeding a theoretical upper bound.
The full-information condition is an information-rich empirical baseline:
language models may still overlook evidence, confuse relationships among
facts, or commit prematurely to an answer. The result instead suggests that
staged evidence sharing and explicit disagreement resolution can facilitate
evidence integration more effectively than presenting all facts in a single
context.

\subsection{Effect of Controller Design}
\label{sec:controller-comparison}

All controller variants share the same actions, routing rules, agents, and prompts, differing only in action selection. The rule-based controller achieves the highest average accuracy \textbf{0.83}, followed by the LLM \textbf{0.80}, LLM+Vote \textbf{0.80}, and MLP \textbf{0.77}, showing that the collective-state variables support effective coordination even without learned control. The two LLM controllers behave differently across model scales as seen in Tab~\ref{tab:conformal-main}: adaptive voting benefits smaller models by stopping before repeated or misleading discussion causes conversational drift, whereas larger models benefit from longer deliberation, which helps uncover additional evidence without degrading context.

\subsection{Generalization to Unseen Tasks}
\label{sec:unseen-results}

To evaluate held-out generalization, we train the MLP controller on five-task subsets and test it on the remaining tasks; the all-task MLP is retained only as an in-domain oracle-imitation diagnostic. The unseen-task MLP achieves \textbf{0.73} average accuracy, compared with \textbf{0.77} for the all-task model, indicating that most of the learned coordination behavior transfers to unseen discussion states. This suggests that the compact state captures recurring conditions such as disagreement, redundancy, and unsupported consensus rather than task-specific content. The remaining gap may reflect task-distribution shift, variation in answer-set size, or imperfect counterfactual reward targets. Full training and evaluation details are provided in the appendix.

\subsection{Ablation Studies}
\label{sec:ablations}

We conduct five ablations that independently examine: (1) random sampling from
conformal prediction sets, (2) action-conditional speaker routing, (3) the
contribution of each communication action, (4) heterogeneous agent
compositions, and (5) controller-state features. In each case, all other
experimental components remain fixed. All detailed results are provided in appendix.

\subsubsection{Conformal Prediction-Set Sampling}
\label{sec:conformal-random-sampling}
tests whether conformally admissible
actions also support successful deliberation. At each discussion round, we uniformly sample an action from the conformal prediction set rather than
executing the controller's preferred action, and follow the resulting
trajectory until termination. Randomly sampled actions maintain high task success across all controller
variants, with only small differences between them. Thus, the calibrated sets
generally contain multiple effective actions rather than a single fragile
choice, providing useful alternative deliberation paths in addition to their
finite-sample statistical guarantee.

\subsubsection{Speaker-Routing Ablation}
\label{sec:speaker-routing-ablation}
 compares action-conditional routing with generic policies while holding the discussion state and controller-selected action fixed. We measure average one-step objective improvement and normalized regret relative to the counterfactual optimal speaker. Action-conditional routing achieves the largest mean objective improvement while maintaining low regret. Random, most-disagreeing, and most-spoken routing produce negative mean improvements. Round-robin obtains marginally lower normalized regret, but substantially lower improvement, partly because near-optimal speakers can have similar outcomes when the counterfactual performance range is small. These results show that no single generic notion of speaker importance is sufficient. Matching disagreement-oriented actions to dissenting agents and evidence-seeking actions to less-participating agents yields more informative messages without jointly optimizing every action--speaker pair.

\subsubsection{Action Leave-One-Out Ablation}
\label{sec:action-leave-one-out}

We remove each communication action for an entire discussion and measure the change in task accuracy. Removing \textsc{Route} causes the largest average drop, highlighting the importance of engaging underrepresented agents, while removing \textsc{SeekNewEvidence} also substantially hurts performance, especially for LLM controllers. \textsc{Challenge} has a modest effect, consistent with its targeted role in preventing premature consensus, whereas \textsc{ClarifyDisagreement} has the smallest effect and can slightly reduce accuracy, suggesting that routing and evidence seeking often resolve disagreement indirectly.

\subsubsection{Heterogeneous-Agent Ablation}
\label{sec:heterogeneous-agent-ablation}

We replace each agent with a different language model while keeping the controller fixed, introducing variation in reasoning ability, calibration, and instruction following. Results shows that Consilience remains effective across heterogeneous compositions. Strong and cross-family agent groups generally benefit most from adaptive coordination, whereas weaker compositions remain limited by their ability to extract evidence. This suggests that Consilience can exploit complementary reasoning styles and primarily relies on observable discussion dynamics rather than model-family-specific interactions.

\subsubsection{Controller-State Feature Ablation}
\label{sec:feature-ablation}
compares MLP controllers trained with different subsets of the controller-state representation. The complete state achieves the
highest task accuracy and lowest discussion objective. Information-related features (redundancy, evidence gain, and premature consensus) produce comparable accuracy, showing that they provide the most direct signals for action selection, but their higher objective indicates less efficient or controlled discussions.

Removing communication cost from the combined belief-and-information state reduces accuracy and produces the worst objective. Conversely, belief-only features yield the shortest discussions but the lowest accuracy because uncertainty and disagreement do not reveal whether new evidence has been
introduced or whether consensus is sufficiently supported. The feature groups are therefore complementary: information features guide intervention selection, belief features characterize collective reasoning, and cost discourages unnecessarily long discussions. Combining all three provides the most reliable balance between task success and discussion quality.

\section{Conclusion}

We introduced Consilience, a closed-loop framework for adaptively selecting, certifying, and routing communication in hidden-profile multi-agent reasoning. Across two benchmarks and 12 language models, Consilience consistently improves over uncontrolled discussion and often matches or exceeds full-information baselines, demonstrating that structured communication can be as important as information access itself. While our evaluation focuses on hidden-profile tasks, future work will explore broader collaborative reasoning settings.

\bibliographystyle{aaai2027}
\bibliography{aaai2027}

\appendixbanner{Appendix}{Supplementary material: a worked hidden-profile
example, the interpretation and full proof of Proposition~1, expanded
experimental results, ablations, and the complete prompt templates.}
\appendix

We first provide an example of this failure mode, and how Consilience corrects it by
surfacing complementary private evidence, as illustrated in
Figure~\ref{fig2}. Then we clarify the interpretation, assumptions, and scope of the conformal coverage guarantee in Proposition~1, and provide its complete proof. We then present additional experimental details and expanded non-conformal results on HiddenBench and the generated travel-planning benchmark, followed by analyses of task-level generalization and heterogeneous-agent groups. Next, we report ablations of the collective-state features, communication actions, speaker routing strategy, and conformal action-sampling procedure. Finally, we provide the complete prompt templates used for belief estimation, message generation, controller action selection, and message-level evaluation, together with the generation, validation, repair, and semantic-review prompts used to construct the synthetic benchmark.

\begin{figure}[!ht]
\centering
\includegraphics[width=1\textwidth]{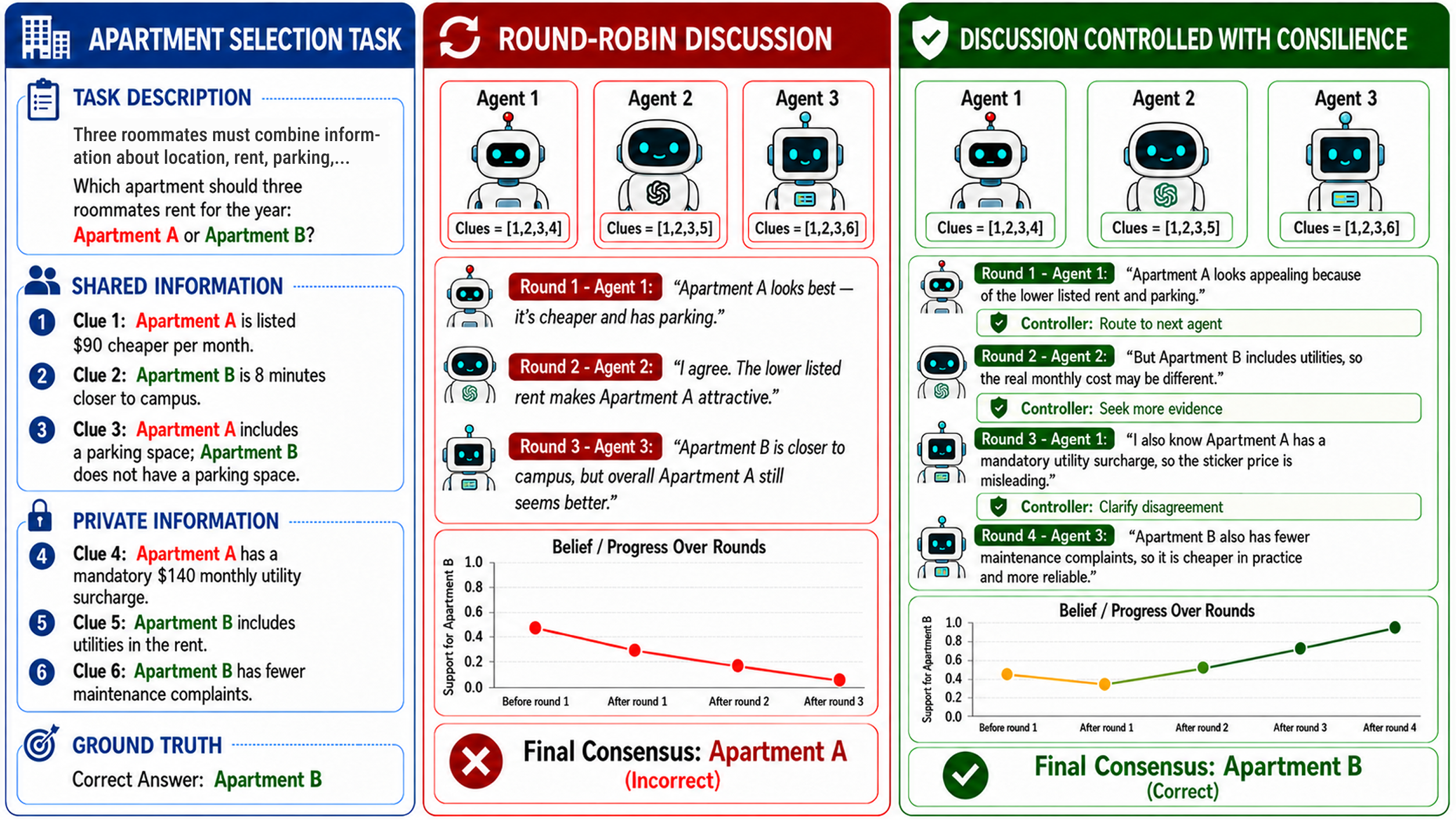} 
\caption{Three agents choose between Apartments A and B. Shared evidence initially
favors Apartment A, but complementary private clues collectively identify
Apartment B as the correct choice. Round-robin discussion repeats common
evidence further suppresses support for B (as Agent 1 has no clue about any utility surcharges for Apartment B in Round 1), and the final decision converges on A.
\textit{Consilience} instead adaptively selects interventions and speakers to surface unshared evidence, reverse premature consensus, and recover B.}
\label{fig2}
\end{figure}

\section{Interpretation, Scope, and Proof of Proposition~1}
\label{app:prop1-interpretation}

\paragraph{Coverage of the controller proposal.}
Proposition~1 provides a round-wise marginal coverage statement for the
one-step regret of the controller's proposed action. Let \(N_t\) denote
the random number of calibration trajectories that reach discussion
round \(t\), and let \(n_t\) denote its realized value. For a fixed
controller \(m\) and round \(t\), the proposition states that
\begin{equation}
\label{eq:app-proposal-coverage}
\begin{aligned}
\Pr\!\Bigl(
    &\mathcal{R}_t\!\left(
        s_t^{\mathrm{test}},
        \pi_m(s_t^{\mathrm{test}})
    \right)
    \leq \epsilon_t^m
    \\
    &\bigm|\;
    \mathcal{T}_{\mathrm{test}}
    \text{ reaches round } t,\,
    N_t=n_t
\Bigr)
\geq 1-\alpha .
\end{aligned}
\end{equation}
Equivalently, with probability at least \(1-\alpha\), the controller's
proposal belongs to the round-specific conformal action set:
\begin{equation}
\pi_m(s_t^{\mathrm{test}})
\in
\Gamma_t^m(s_t^{\mathrm{test}}).
\end{equation}

The probability in
Equation~\eqref{eq:app-proposal-coverage} is marginal over the
calibration trajectories and the new test trajectory, conditional on
the test trajectory reaching round \(t\) and on \(N_t=n_t\). It is not,
in general, a coverage guarantee conditional on one fixed realized
calibration sample.

\paragraph{Deterministic property of the executed action.}
The probabilistic coverage statement concerns the controller's raw
proposal rather than the action ultimately executed by Consilience.
The executed action is
\begin{equation}
\label{eq:accepted-action}
a_t =
\begin{cases}
\pi_m(s_t),
&
\mathcal{R}_t(s_t,\pi_m(s_t))
\leq \epsilon_t^m,
\\[3pt]
\operatorname{Select}\!\left(\Gamma_t^m(s_t)\right),
&
\mathcal{R}_t(s_t,\pi_m(s_t))
> \epsilon_t^m.
\end{cases}
\end{equation}
Provided that the conformal action set is evaluated using all
admissible counterfactual branches and that
\[
\operatorname{Select}\!\left(\Gamma_t^m(s_t)\right)
\in \Gamma_t^m(s_t),
\]
the executed action satisfies
\begin{equation}
\label{eq:executed-action-bound}
\mathcal{R}_t(s_t,a_t)
\leq \epsilon_t^m
\end{equation}
for every evaluated state \(s_t\).

Equation~\eqref{eq:executed-action-bound} is an algorithmic consequence
of the acceptance rule, rather than an additional probabilistic
conformal guarantee. If the proposal is accepted, it satisfies the
threshold by definition. If it is rejected, the fallback returns an
element of \(\Gamma_t^m(s_t)\), every member of which has regret at most
\(\epsilon_t^m\). The conformal set is nonempty because at least one
admissible action minimizes the realized counterfactual next-state loss
and therefore has zero one-step regret.

This deterministic statement assumes that the counterfactual branch
used to evaluate \(\mathcal{R}_t(s_t,a_t)\) is the branch committed to
the public trajectory. In particular, the selected speaker message and
the resulting belief updates must not be resampled after acceptance.
Otherwise, the realized next state may differ from the evaluated
counterfactual state, and
Equation~\eqref{eq:executed-action-bound} need not hold for the
resampled outcome.

\paragraph{Scope of the guarantee.}
The proposition bounds one-step regret relative to the admissible
action producing the lowest realized counterfactual next-state loss
under the surrogate discussion objective \(J\). It does not identify a
unique ground-truth communication action, guarantee that the
surrogate-optimal action improves answer correctness, establish global
policy optimality, or guarantee that the final collective answer is
correct.

The proposition also does not provide simultaneous coverage over all
rounds of a discussion trajectory. In particular, the collection of
round-wise guarantees
\[
\Pr\!\left(
    Q_t^m(s_t^{\mathrm{test}})
    \leq \epsilon_t^m
    \,\middle|\,
    \mathcal{T}_{\mathrm{test}}
    \text{ reaches round } t,\,
    N_t=n_t
\right)
\geq 1-\alpha
\]
does not imply
\[
\Pr\!\left(
    Q_t^m(s_t^{\mathrm{test}})
    \leq \epsilon_t^m
    \text{ for every reached round } t
\right)
\geq 1-\alpha.
\]
A trajectory-level guarantee would require an additional simultaneous
or sequential calibration argument, which is outside the scope of
Proposition~1.

\paragraph{Round-specific exchangeability.}
The marginal coverage statement requires exchangeability of the
calibration and test proposal scores at the round under consideration.
More precisely, conditional on \(N_t=n_t\) and on the test trajectory
reaching round \(t\), the scores
\[
Q_t^m(s_t^1),\ldots,Q_t^m(s_t^{n_t}),
Q_t^m(s_t^{\mathrm{test}})
\]
must be exchangeable. This condition holds under the proposition's
assumptions when calibration and test trajectories are generated by
the same fixed, policy-induced trajectory distribution and when the
controller and all components of the score-generating procedure are
fixed independently of the calibration sample.

Because communication control is sequential, exchangeability at a
later round does not follow automatically from exchangeability at the
initial round. In particular, if an earlier conformal replacement
changes the distribution of later test states relative to the
calibration trajectories, exchangeability of the later-round scores is
an additional assumption. The guarantee should therefore be
understood as conditional on round-specific exchangeability under the
deployed policy.

Accordingly, calibration is performed separately for every
configuration for which coverage is claimed, including the controller,
agent model, prompting configuration, admissible action space, routing
policy, termination rule, decoding procedure, and score-generating
components. Any learned component must be fixed before calibration or
trained using data independent of the calibration sample.

\paragraph{Finite-sample quantile convention.}
Let
\[
k_t =
\left\lceil
    (n_t+1)(1-\alpha)
\right\rceil.
\]
When \(k_t\leq n_t\), the threshold \(\epsilon_t^m\) is the \(k_t\)-th
smallest calibration score. When \(k_t=n_t+1\), including the case
\(n_t=0\), we use the standard convention
\[
\epsilon_t^m=+\infty.
\]
This convention yields valid but uninformative coverage because every
admissible action belongs to the resulting conformal set.

When ties occur among the pooled calibration and test scores, the
nonrandomized procedure may be conservative. The exact coverage value
\(k_t/(n_t+1)\) stated in Proposition~1 applies when \(k_t\leq n_t\)
and the pooled scores are almost surely distinct.

\paragraph{Proof of Proposition~1.}
\begin{proof}
Condition on the event
\[
\mathcal{E}_t
=
\left\{
    \mathcal{T}_{\mathrm{test}}
    \text{ reaches round }t,\,
    N_t=n_t
\right\}.
\]
For notational convenience, define
\[
Q_i = Q_t^m(s_t^i),
\qquad i=1,\ldots,n_t,
\]
and
\[
Q_{n_t+1}
=
Q_t^m(s_t^{\mathrm{test}}).
\]
By assumption,
\[
Q_1,\ldots,Q_{n_t},Q_{n_t+1}
\]
are exchangeable conditional on \(\mathcal{E}_t\).

By the definitions of \(Q_t^m\) and \(\Gamma_t^m\),
\begin{equation}
\label{eq:proposal-set-equivalence}
\pi_m(s_t^{\mathrm{test}})
\in
\Gamma_t^m(s_t^{\mathrm{test}})
\quad\Longleftrightarrow\quad
Q_{n_t+1}\leq\epsilon_t^m.
\end{equation}

First suppose that \(k_t=n_t+1\). By convention,
\(\epsilon_t^m=+\infty\), and hence
\[
\Pr\!\left(
    Q_{n_t+1}\leq\epsilon_t^m
    \,\middle|\,
    \mathcal{E}_t
\right)
=1
=
\frac{k_t}{n_t+1}.
\]

Now suppose that \(k_t\leq n_t\), so that
\(\epsilon_t^m\) is the \(k_t\)-th smallest calibration score.
Under conditional exchangeability, the standard split-conformal
order-statistic argument gives
\begin{equation}
\label{eq:rank_lower_bound}
\Pr\!\left(
    Q_{n_t+1}\leq\epsilon_t^m
    \,\middle|\,
    \mathcal{E}_t
\right)
\geq
\frac{k_t}{n_t+1}.
\end{equation}
This inequality remains valid in the presence of ties; ties may make
the nonrandomized conformal procedure conservative.

From the definition of \(k_t\),
\begin{equation}
\frac{k_t}{n_t+1}
=
\frac{
    \left\lceil
        (n_t+1)(1-\alpha)
    \right\rceil
}{
    n_t+1
}
\geq
1-\alpha.
\end{equation}
Combining this inequality with
Equation~\eqref{eq:rank_lower_bound} proves the round-wise lower
coverage bound. Equation~\eqref{eq:proposal-set-equivalence} then gives
the equivalent action-set statement.

Finally, suppose that \(k_t\leq n_t\) and that the \(n_t+1\) pooled
calibration and test scores are almost surely distinct. Conditional
exchangeability implies that the rank of the test score among the
pooled scores is uniformly distributed over
\[
\{1,\ldots,n_t+1\}.
\]
Under distinctness,
\[
Q_{n_t+1}\leq\epsilon_t^m
\]
holds if and only if the pooled rank of \(Q_{n_t+1}\) is at most
\(k_t\). Consequently,
\begin{equation}
\Pr\!\left(
    Q_{n_t+1}\leq\epsilon_t^m
    \,\middle|\,
    \mathcal{E}_t
\right)
=
\frac{k_t}{n_t+1}.
\end{equation}
Using
\[
\lceil x\rceil < x+1,
\]
we obtain
\begin{align}
\frac{k_t}{n_t+1}
&=
\frac{
    \left\lceil
        (n_t+1)(1-\alpha)
    \right\rceil
}{
    n_t+1
}
\\
&<
1-\alpha+\frac{1}{n_t+1}.
\end{align}
Together with the lower bound, this proves
\[
1-\alpha
\leq
\frac{k_t}{n_t+1}
<
1-\alpha+\frac{1}{n_t+1},
\]
completing the proof.
\end{proof}

\section{Experimental Details}
\textbf{Models.} We evaluate Consilience on 13 instruction-tuned open-weight language models spanning multiple families and scales, including Qwen3 (0.6B, 1.7B, 4B, 8B, 14B, 32B, and 235B), Qwen2.5-32B, Mistral-24B, Phi-4, Llama-3.1-8B, Llama-3.3-70B, and Gemma-3-27B. We further include GPT-4.1-mini and Claude-3-Haiku in a supplementary closed-model experiment comparing the full-information baseline against LLM-based orchestration with adaptive voting to assess whether the observed trends generalize beyond open-weight models. Unless otherwise specified, all agents within a discussion instantiate the same underlying language model. 

For LLM-based controller, the same model additionally serves as the controller responsible for selecting communication actions, whereas the learned-controller variants replace this decision process with a lightweight MLP while retaining the evaluated LLM for belief estimation, message generation, transcript analysis, and final voting.

\section{Non-Conformal Results}

\begin{table}[t]
\centering
\setlength{\tabcolsep}{3.4pt}
\renewcommand{\arraystretch}{1.08}
\caption{
Task and vote accuracy for Hiddenbench across the evaluated language models.
Each entry reports \emph{task accuracy / vote accuracy}.
Higher values are better.
}
\label{tab:hb-results}
\widetab{\begin{tabular}{lccccccc}
\toprule
\rowcolor{tblhead}
\textbf{Model} &
\textbf{Hidden Pre} &
\textbf{Hidden Post} &
\textbf{Full Info} &
\textbf{Rules} &
\textbf{MLP} &
\textbf{LLM} &
\textbf{LLM+Vote} \\
\midrule

Qwen3-0.6B
& .159/.202
& .185/.203
& .359/.364
& \textbf{.374/.391}
& .287/.295
& .313/.306
& .297/.314 \\

Qwen3-1.7B
& .062/.095
& .092/.092
& .267/.285
& .559/.563
& \textbf{.600/.599}
& .554/.561
& .564/.561 \\

Qwen3-4B
& .092/.095
& .097/.124
& .369/.406
& .826/.829
& \textbf{.831/.829}
& .826/.831
& .800/.789 \\

Qwen3-8B
& .062/.126
& .149/.175
& .631/.596
&\textbf{ .908/.913}
& .856/.871
& .877/.879
& .805/.808 \\

Qwen3-14B
& .123/.161
& .267/.266
& .774/.752
& .872/.879
& .774/.783
& .821/.841
& \textbf{.928/.928} \\

Qwen3-32B
& .108/.141
& .282/.287
& .826/.817
& .918/.900
& .841/.856
& .913/.903
& \textbf{.923/.918} \\

Qwen3-235B
& .062/.123
& .292/.306
& .862/.841
& \textbf{.928/.943}
& .856/.846
& .887/.879
& .836/.848 \\

Qwen2.5-32B
& .046/.087
& .297/.290
& .908/.881
& .897/.903
& .867/.874
& \textbf{.923/.926}
& .918/.924 \\

Llama-3.1-8B
& .077/.177
& .354/.354
& .610/.599
& \textbf{.733/.682}
& .667/.655
& .595/.572
& .600/.590 \\

Llama-3.3-70B
& .046/.123
& .436/.460
& .862/.877
& .928/.935
& .928/.926
& \textbf{.949/.953}
& .908/.914 \\

Phi-4
& .077/.150
& .195/.221
& .877/.819
& .923/.924
& .826/.831
& .903/.895
& \textbf{.944/.937} \\

Mistral-24B
& .072/.170
& .467/.485
& .908/.859
& \textbf{.923/.924}
& .841/.848
& .903/.914
& .923/.925 \\

Gemma-3-27B
& .097/.184
& .349/.352
& .846/.821
& \textbf{.903/.883}
& .872/.863
& .856/.848
& .897/.904 \\

DeepSeek-V4-Flash
& .092/.185
& .384/.395
& .907/.881
& \textbf{.953/.957}
& .784/.779
& .953/.937
& .908/.889 \\

DeepSeek-V4-Pro
& .077/.158
& .431/.474
& .954/.933
& \textbf{1.00/1.00}
& .939/.921
& .969/.968
& .969/.972 \\

\bottomrule
\end{tabular}}
\end{table}

\begin{table}[t]
\centering
\setlength{\tabcolsep}{3.4pt}
\renewcommand{\arraystretch}{1.08}
\caption{
Task and vote accuracy for the generated benchmark across the evaluated language models.
Each entry reports \emph{task accuracy / vote accuracy}. Higher values are better.}
\label{tab:nb-results}
\widetab{\begin{tabular}{lccccccc}
\toprule
\rowcolor{tblhead}
\textbf{Model} &
\textbf{Hidden Pre} &
\textbf{Hidden Post} &
\textbf{Full Info} &
\textbf{Rules} &
\textbf{MLP} &
\textbf{LLM} &
\textbf{LLM+Vote} \\
\midrule

Qwen3-0.6B
& .228/.240
& .236/.240
& .204/.230
& .228/.230
& .220/.220
& .209/.210
& \textbf{.248/.240} \\

Qwen3-1.7B
& .204/.210
& .272/.270
& .248/.240
& \textbf{.312/.310}
& .264/.260
& .252/.250
& .264/.260 \\

Qwen3-4B
& .224/.230
& .292/.290
& .292/.270
& .456/.440
& .428/.430
& \textbf{.480/.480}
& .476/.470 \\

Llama-3.1-8B
& .224/.230
& .312/.310
& .280/.270
& .509/.530
& .384/.370
& \textbf{.524/.520}
& .504/.490 \\

Qwen3-8B
& .248/.230
& .324/.300
& .288/.270
& .684/.690
& .528/.540
& .652/.650
& \textbf{.716/.710} \\

Qwen3-14B
& .256/.240
& .400/.400
& .336/.310
& .696/.690
& .544/.540
& .684/.690
& \textbf{.744/.750} \\

Mistral-24B
& .292/.280
& .576/.570
& .432/.370
& .872/.870
& .496/.480
& .624/.620
& \textbf{.832/.840} \\

Gemma-3-27B
& .244/.250
& .730/.720
& .444/.400
& .806/.800
& .549/.560
& .750/.740
& \textbf{.858/.850} \\

Qwen3-32B
& .256/.240
& .668/.660
& .376/.360
& .660/.650
& .384/.370
& .592/.580
& \textbf{.716/.710} \\

Qwen2.5-32B
& .284/.280
& .684/.670
& .496/.440
& .884/.890
& .752/.740
& .880/.880
& \textbf{.900/.900} \\

Llama-3.3-70B
& .276/.270
& .728/.720
& .520/.440
& .889/.890
& .848/.830
& \textbf{.924/.920}
& .912/.910 \\

Qwen3-235B
& .308/.270
& .756/.760
& .472/.430
& \textbf{.857/.860}
& .688/.690
& .809/.810
& .848/.840 \\

\bottomrule
\end{tabular}}
\end{table}

\paragraph{HiddenBench.}
Table~\ref{tab:hb-results} shows that access to distributed evidence
alone is insufficient without effective coordination. Averaged across the 15
models, hidden pre-discussion achieves only \(0.083\) task accuracy, while
uncontrolled round-robin discussion improves this to \(0.285\). Providing all
evidence directly increases average accuracy to \(0.731\), confirming that the
main difficulty arises from recovering and integrating information distributed
across agents. All four Consilience controllers exceed the full-information
reference on average: Rules achieves \(0.843\) task accuracy, followed by LLM
at \(0.816\), LLM+Vote at \(0.815\), and MLP at \(0.785\). The rule-based
controller therefore provides the strongest average performance, although the
best controller varies across models. In particular, the LLM-based controllers
are competitive for stronger models, and adaptive voting produces the best
results for models such as Qwen3-14B and Phi-4. Task and vote accuracy are also
closely aligned for all controllers, indicating that improvements in the final
collective answer generally correspond to broader agreement among the
individual agents rather than being produced only by plurality aggregation.

Supplementary experiments with frontier API models show the same trend:
Consilience with LLM+Vote matches or exceeds full-information reasoning,
achieving \(0.938\) versus \(0.877\) on openai's gpt models and \(0.862\) versus \(0.646\) on Claude-3-Haiku.

\paragraph{Generated benchmark.}
Table~\ref{tab:nb-results} presents a different performance
profile. Hidden pre-discussion obtains an average task accuracy of \(0.254\),
while uncontrolled discussion increases it to \(0.498\). Interestingly, hidden
post-discussion outperforms the full-information condition, which achieves only
\(0.366\). This suggests that merely placing all evidence in one context does
not guarantee that the model will identify and combine the relevant facts;
interaction can help expose, repeat, and reconcile evidence that may otherwise
be overlooked. Among the controlled methods, LLM+Vote achieves the highest
average task accuracy at \(0.668\), followed closely by Rules at \(0.654\),
LLM at \(0.615\), and MLP at \(0.507\). Relative to uncontrolled discussion,
LLM+Vote improves task accuracy by \(0.170\), whereas the MLP improves it by
only \(0.009\). Adaptive termination is therefore especially useful on this
benchmark, where continued discussion may reinforce an incorrect answer after
the relevant evidence has already been surfaced.

Performance on the generated benchmark also depends strongly on the capability
of the participating model. The smallest Qwen3 models receive limited benefit
from communication control, whereas Qwen2.5-32B and Llama-3.3-70B exceed
\(0.88\) task accuracy under the strongest controllers. The MLP controller is
less consistent under this setting, performing substantially below Rules and
the LLM-based controllers for models such as Mistral-24B and Qwen3-32B. This
pattern suggests that the fixed learned mapping from collective-state features
to actions is more sensitive to changes in benchmark structure or discussion
dynamics, while an LLM controller can adapt its action choice using the
current transcript and beliefs.

\paragraph{Comparison across benchmarks.}
Across the 12 models shared by both tables, controller performance is generally
lower on the generated benchmark. Relative to HiddenBench, average task
accuracy decreases by \(0.160\) points for Rules, \(0.261\) points for MLP, and
\(0.170\) points for LLM, but by only \(0.115\) points for LLM+Vote. Thus,
LLM+Vote exhibits the smallest cross-benchmark degradation, while the MLP
exhibits the largest. Together, these results indicate that explicit
communication control consistently improves hidden-profile reasoning, but the
most effective controller depends on the discussion distribution: simple
state-dependent rules are particularly effective on HiddenBench, whereas
transcript-aware control with adaptive voting transfers more robustly to the
generated benchmark.

\subsection{Generalization to Held-Out Tasks}
\label{sec:heldout-mlp}

The standard MLP controller is trained using exhaustive action outcomes collected from all benchmark tasks. It therefore measures how accurately a lightweight learned controller can imitate the one-step action oracle within the task distribution used to generate its supervision. However, because the same tasks contribute to both controller training and downstream evaluation, this condition should be interpreted primarily as an in-domain oracle-imitation diagnostic rather than as a strict test of task-level generalization.

We therefore construct a separate held-out evaluation. HiddenBench is partitioned into a five-task training subset and a disjoint 60-task test set. For each training task, the exhaustive action explorer follows the current discussion trajectory and, at every reached state, executes each of the four available communication actions:
\textsc{Challenge}, \textsc{ClarifyDisagreement},
\textsc{SeekNewEvidence}, and \textsc{Route}.

The resulting training examples pair the six-dimensional collective-state representation with a four-dimensional vector containing the observed reward of every communication action. The MLP is trained only on examples generated from the five training tasks. Its feature-normalization statistics, reward-normalization statistics, and network parameters are therefore estimated exclusively from the training split. After training, the controller is frozen and evaluated on the remaining 60 tasks without any additional fitting or access to their counterfactual action outcomes. At each test-time discussion state, the MLP predicts the expected immediate reward of the four actions and executes the action with the highest predicted value. Agent prompting, speaker routing, belief estimation, and final voting remain unchanged.

This evaluation measures whether the mapping learned from abstract collective-state variables to communication actions transfers across decision problems. The controller does not observe task identifiers or raw private clues directly; it acts on quantities such as group-belief entropy, inter-agent disagreement, message redundancy, communication cost, evidence gain, and premature consensus. Consequently, successful transfer would indicate that these variables capture recurring coordination conditions rather than merely memorizing the content or trajectory of individual benchmark tasks.

Across the 13 models for which held-out results are available, the held-out MLP obtains an average task accuracy of \textbf{0.73}. The corresponding all-task MLP average is \textbf{0.77}. Thus, withholding the evaluation tasks during controller training is associated with a descriptive decrease of approximately 0.04. The relatively small aggregate gap suggests that a substantial portion of the learned action-selection behavior transfers to discussion trajectories that were not observed during training.

Transfer is particularly strong for medium and large language models. 8 of the 13 evaluated models achieve at least 0.83 held-out task accuracy. Qwen3-32B, Qwen3-235B, Llama-3.3-70B, and Mistral-24B each reach 0.86 task accuracy, while Qwen3-8B, Qwen3-14B, Qwen2.5-32B, and Gemma-3-27B reach 0.83. These results indicate that the controller trained on only a small subset of tasks can still select useful communication actions when paired with agents capable of producing stable beliefs and informative responses.

The largest degradations relative to the all-task MLP occur for Qwen3-1.7B and Phi-4, whose task accuracies decrease by 0.17 and 0.16 respectively. Qwen3-0.6B also decreases by 0.09 points. This pattern suggests that held-out transfer is less reliable when the participating agents produce noisier belief distributions or when their discussion states differ substantially from those represented in the controller-training trajectories. Because the MLP observes only the collective-state summary, changes in the quality or calibration of agent beliefs can produce a state distribution that is difficult to interpret using a controller trained from a small number of tasks.

Several models show comparable or slightly higher accuracy under the held-out controller. For example, held-out task accuracy increases by 0.06 points for Qwen3-14B, 0.07 points for Llama-3.1-8B, and 0.03 points for both Qwen3-32B and Mistral-24B. These increases should not be interpreted as evidence that using less training data improves the controller. The two conditions are evaluated over different task sets, and the deliberations themselves involve stochastic language-model outputs. Small positive differences can therefore arise from variation in task difficulty, generated messages, belief estimates, or action trajectories.

Task and vote accuracy remain closely aligned under held-out evaluation. Their mean absolute difference across models is approximately 0.02 accuracy points, and for Qwen3-8B, Qwen3-14B, and Llama-3.3-70B the two values are identical. This agreement indicates that held-out performance is generally supported by the individual agents' final judgments rather than arising only from plurality aggregation or confidence-based tie-breaking. Larger discrepancies for a few models nevertheless show that communication control can affect both the correctness of the group decision and the distribution of support among agents.

Overall, the held-out results provide evidence that the controller learns reusable coordination behavior from the collective-state representation. The performance decrease relative to the in-domain MLP is modest on average, and high accuracy is retained across most medium and large models despite training on only five tasks. At the same time, the larger losses for some weaker or behaviorally distinct models show that transfer is not uniform. The learned controller generalizes most reliably when the test-time discussion dynamics resemble the state distributions encountered during training.

For a strictly controlled estimate of the generalization gap, the all-task MLP should additionally be evaluated on the same 60 held-out tasks. The current comparison uses the 65-task result for the standard MLP and the 60-task result for the held-out MLP; consequently, the reported 0.03 and 0.04 point differences combine controller-training effects with a small difference in the evaluation task set. Evaluating both checkpoints on the identical 60-task test split would isolate the effect of withholding task-level training supervision.

\section{Heterogeneous-Agent Analysis}

Table~\ref{tab:heterogeneous} evaluates whether Consilience remains effective
when the participating agents use different underlying language models.
Overall, heterogeneous groups can achieve performance comparable to strong
homogeneous groups, but their results are substantially more sensitive to
model composition, agent assignment, and controller choice. Among the
heterogeneous four-agent configurations, the rule-based controller reaches
its highest task accuracy of \(0.966\) on both Position Rot-4 and Ladder L2,
while the LLM and LLM+Voting controllers reach \(0.948\) on Ladder L4.
These results show that model heterogeneity does not inherently prevent
successful coordination when the group contains sufficiently capable and
complementary agents.

The position-rotation experiments reveal that assigning the same four models
to different agent positions can materially alter performance. For example,
rule-based accuracy varies from \(0.879\) in the documented assignment to
\(0.966\) in Position Rot-4, while LLM+Voting varies from \(0.828\) to
\(0.931\) across the rotations. Thus, performance depends not only on which
models are present, but also on which private evidence each model receives.
Using the stronger Qwen3-32B controller further improves the documented
heterogeneous group, raising MLP accuracy from \(0.776\) to \(0.931\), LLM
accuracy from \(0.845\) to \(0.897\), and LLM+Voting accuracy from \(0.879\)
to \(0.931\). This indicates that controller capability can compensate for
some of the variability introduced by heterogeneous participants.

Group composition also has a clear effect. Configurations composed primarily
of capable models, such as Family-Qwen, Spread-Low, and Ladder L4, remain
strong across several controllers. In contrast, groups containing several
small models perform substantially worse: Star and Ladder L1 obtain only
\(0.586\) task accuracy under LLM+Voting. The Weak-Link configuration shows
that introducing a single weak agent does not necessarily cause failure,
particularly for Rules, but can reduce the effectiveness of adaptive voting.
Among homogeneous groups, Qwen3-32B with LLM+Voting achieves the highest
overall task accuracy of \(0.983\), while the other strong homogeneous groups
also remain consistently competitive. Taken together, the results suggest
that Consilience generalizes to heterogeneous teams, but reliable performance
depends on both the capability distribution within the group and the
controller's ability to route communication across differently capable
agents.

\begin{table}[t]
\centering
\small
\setlength{\tabcolsep}{4pt}
\caption{
Performance of heterogeneous and homogeneous multi-agent groups on the
corresponding HiddenBench agent-count partitions. Four-agent configurations
are evaluated on the 58-task four-agent partition, while
Documented-3Agent is evaluated on the seven-task three-agent partition.
Each controller entry reports \emph{task accuracy / vote accuracy}. The orchestration controller is
Qwen3-8B unless marked with $\dagger$, where Qwen3-32B is used.
Model abbreviations are:
Q0.6, Q1.7, Q4, Q8, Q14, and Q32 for Qwen3 at the corresponding parameter
scale; Q2.5-32 for Qwen2.5-32B; P4 for Phi-4; M24 for Mistral-24B;
G27 for Gemma-3-27B; L8 for Llama-3.1-8B; and L70 for Llama-3.3-70B.
}
\label{tab:heterogeneous}
{%
\widetab{\begin{tabular}{llcccc}
\toprule
\rowcolor{tblhead}
\textbf{Configuration} &
\textbf{Agent Models} &
\textbf{Rules} &
\textbf{MLP} &
\textbf{LLM} &
\textbf{LLM+Voting} \\
\midrule

\multicolumn{6}{l}{\textit{Heterogeneous groups}} \\
\midrule

Documented-4Agent
& Q8, Q32, P4, M24
& .879/.888
& .776/.780
& .845/.866
& .879/.888 \\

Documented-3Agent
& Q8, Q32, P4
& .714/.714
& .571/.571
& .714/.667
& .714/.667 \\

Position Rot-2
& Q32, P4, M24, Q8
& .897/.897
& .897/.897
& .828/.853
& .931/.909 \\

Position Rot-3
& P4, M24, Q8, Q32
& .914/.931
& .897/.888
& .879/.875
& .897/.914 \\

Position Rot-4
& M24, Q8, Q32, P4
& .966/.961
& .879/.879
& .897/.897
& .828/.836 \\

Controller-Strong$^\dagger$
& Q8, Q32, P4, M24
& .879/.897
& .931/.892
& .897/.901
& .931/.927 \\

Family-Cross
& L8, P4, M24, G27
& .914/.897
& .776/.772
& .845/.815
& .914/.871 \\

Family-Qwen
& Q8, Q14, Q32, Q2.5-32
& .897/.892
& .862/.853
& .914/.914
& .914/.905 \\

Spread-Low
& P4, M24, G27, Q2.5-32
& .948/.940
& .793/.797
& .845/.853
& .948/.927 \\

Spread-High
& Q0.6, Q4, Q14, Q32
& .793/.746
& .845/.810
& .741/.724
& .724/.685 \\

Weak-Link
& Q0.6, Q32, P4, M24
& .897/.853
& .828/.802
& .845/.819
& .759/.759 \\

Star
& Q32, Q0.6, Q1.7, Q4
& .759/.694
& .655/.634
& .621/.621
& .586/.556 \\

Ladder L1
& Q0.6, Q1.7, Q4, Q8
& .776/.746
& .621/.616
& .724/.711
& .586/.603 \\

Ladder L2
& Q4, Q8, Q14, P4
& .966/.966
& .897/.888
& .845/.853
& .879/.871 \\

Ladder L3
& Q14, P4, M24, Q32
& .897/.918
& .828/.828
& .793/.780
& .879/.858 \\

Ladder L4
& Q32, Q2.5-32, G27, L70
& .931/.927
& .897/.892
& .948/.948
& .948/.935 \\

\midrule
\multicolumn{6}{l}{\textit{Homogeneous groups}} \\
\midrule

Qwen3-32B
& 4$\times$Q32
& .948/.927
& .862/.862
& .897/.905
& .983/.957 \\

Mistral-24B
& 4$\times$M24
& .931/.927
& .879/.879
& .914/.905
& .948/.948 \\

Phi-4
& 4$\times$P4
& .914/.918
& .879/.879
& .931/.931
& .879/.871 \\

Qwen3-8B
& 4$\times$Q8
& .948/.944
& .810/.841
& .914/.922
& .810/.819 \\

\bottomrule
\end{tabular}}%
}
\end{table}

\section{Ablations}

\paragraph{State-feature ablation.}
Table~\ref{tab:feature-group-ablation} evaluates how different subsets of
the collective-state representation affect the learned MLP controller.
Using the complete feature set achieves the best overall trade-off, with
82.22\% task success and the lowest final objective value of 0.0239.
The information-only controller matches this success rate, but incurs
slightly higher communication cost and a substantially worse final
objective, indicating that the remaining state variables improve the
quality and efficiency of the learned policy even when they do not change
the final task accuracy.

Removing information-related features reduces performance. Both the
belief-plus-information and belief-plus-cost variants achieve 77.78\%
success, while the belief-only controller performs worst at 75.56\%.
Although the belief-only representation produces the lowest communication
cost, this reduction is accompanied by a 6.66 percentage-point drop in
success relative to the full model. Among the reduced representations,
belief plus cost yields a lower final objective than belief plus
information, suggesting that explicit cost awareness helps the controller
avoid inefficient deliberation. Overall, the results show that belief
statistics alone are insufficient, while combining belief, information,
and cost-related signals produces the most balanced controller.

\begin{table}[H]
\centering
\small
\setlength{\tabcolsep}{4pt}
\caption{State-feature ablation for Qwen3-8B. Each row retrains the MLP using
only the retained feature groups. Higher success is better; lower communication
cost and final objective are better.}
\label{tab:feature-group-ablation}
\resizebox{\columnwidth}{!}{
\begin{tabular}{lrrr}
\toprule
\rowcolor{tblhead}
\textbf{Features retained} &
\textbf{Success} \(\uparrow\) &
\textbf{Communication cost }\(\downarrow\) &
\textbf{Final objective} \(\downarrow\) \\
\midrule
\textbf{All features}
    & \textbf{82.22\%} & 0.5228 & \textbf{0.0239} \\
Information only
    & \textbf{82.22\%} & 0.5254 & 0.2277 \\
Belief + information
    & 77.78\% & 0.5285 & 0.3509 \\
Belief + cost
    & 77.78\% & 0.5253 & 0.1582 \\
Belief only
    & 75.56\% & \textbf{0.4625} & 0.2212 \\
\bottomrule
\end{tabular}
}
\end{table}

\paragraph{Action ablation.}
Table~\ref{tab:action-ablation} evaluates the contribution of each
communication action by removing it while keeping the remaining controller
unchanged. Across controllers, routing and new-evidence solicitation are the
most consequential components. Removing routing produces the largest average
drop, reducing task accuracy by 4.6 percentage points, while removing
\textsc{SeekNewEvidence} decreases average accuracy by 3.8 points. These effects
are especially pronounced for the LLM+Voting controller, where removing
routing and new-evidence solicitation reduces accuracy by 13.8 and 12.3
points, respectively. This suggests that adaptive termination is particularly
dependent on directing the discussion toward underrepresented agents and
unshared evidence before voting occurs.

The effects vary across controller types. The rule-based controller depends
most strongly on routing, whereas the standard LLM controller is most affected
by removing \textsc{SeekNewEvidence}. Removing \textsc{Challenge} causes a
small but consistent degradation for Rules, MLP, and LLM, indicating that
explicitly testing the leading hypothesis provides a modest benefit.
In contrast, removing \textsc{ClarifyDisagreement} has no average effect, and
removing voting leaves the Rules and LLM+Voting results unchanged in this
ablation. Several removals improve performance, including the removal of
\textsc{SeekNewEvidence} from the MLP controller, showing that individual
actions are not uniformly useful and may interact with the controller's
selection policy. Overall, the results indicate that the full action set is
most valuable for providing complementary interventions, with routing and
new-evidence acquisition contributing the strongest aggregate gains.

\begin{table}[t]
\centering
\caption{Action ablation study on HiddenBench. Each row removes one
communication action from the full controller. Entries report task accuracy
(\%), with the change relative to the corresponding full action set shown in
parentheses. Negative values indicate performance degradation.}
\label{tab:action-ablation}

\small
\setlength{\tabcolsep}{8pt}

\begin{tabular*}{\textwidth}{@{\extracolsep{\fill}}lccccc@{}}
\toprule
\rowcolor{tblhead}
\textbf{Configuration} &
\textbf{Rules} &
\textbf{MLP} &
\textbf{LLM} &
\textbf{LLM+Vote} &
\textbf{Average} \\
\midrule

Full
& 92.3
& 87.7
& 89.2
& 83.1
& \textbf{88.1} \\

-- Challenge
& 90.8 ($-1.5$)
& 86.2 ($-1.5$)
& 87.7 ($-1.5$)
& 83.1 (0.0)
& 86.9 ($-1.2$) \\

-- Clarify
& 93.8 (+1.5)
& 87.7 (0.0)
& 89.2 (0.0)
& 81.5 ($-1.5$)
& 88.1 (0.0) \\

-- New Evidence
& 92.3 (0.0)
& 92.3 (+4.6)
& 81.5 ($-7.7$)
& 70.8 ($-12.3$)
& 84.2 ($-3.8$) \\

-- Routing
& 87.7 ($-4.6$)
& 89.2 (+1.5)
& 87.7 ($-1.5$)
& 69.2 ($-13.8$)
& \textbf{83.5 ($-4.6$)} \\

-- Voting
& 92.3 (0.0)
& NA
& NA
& 83.1 (0.0)
& 87.7 (0.0) \\

\bottomrule
\end{tabular*}
\end{table}

\paragraph{Speaker-routing ablation.}
Table~\ref{tab:speaker-routing-ablation} isolates the effect of speaker
selection by holding the controller-selected communication action fixed.
Consilience's action-conditional router achieves the lowest mean objective
change, \(\Delta J=-0.0079\), indicating that its selected speakers produce
the strongest average immediate improvement in the discussion state.
Round-robin is the closest alternative, with \(\Delta J=-0.0017\), while
random routing slightly worsens the objective on average
(\(\Delta J=0.0145\)).

The normalized-regret results are less decisive. Round-robin obtains the
lowest regret at \(0.4839\), followed closely by action-conditional routing
at \(0.4868\) and random routing at \(0.4930\). Thus, although the proposed
router performs best under the mean objective-change metric, its advantage
over simple round-robin and random selection is small, and it does not
achieve the lowest normalized regret. In contrast, selecting the
most-disagreeing or most-spoken agent performs substantially worse under
both metrics, suggesting that disagreement or participation frequency alone
is insufficient for identifying the most useful speaker. Overall, the
results support action-aware routing over these stronger heuristic
alternatives, but provide only limited evidence of a clear advantage over
round-robin routing.

\paragraph{Conformal-sampling ablation.}
Table~\ref{tab:controller-comparison} compares the four controller variants
over 400 trials with Qwen3-8B. The LLM and MLP controllers achieve the highest
mean success rate, both reaching 82.75\%, despite using different action-selection
mechanisms. Their discussion lengths are also similar, averaging 9.54 and 9.87
rounds per trial, respectively. This indicates that the lightweight learned
controller can match the success of the LLM controller, although it requires
slightly more communication.

\begin{table}[H]
\centering
\setlength{\tabcolsep}{4pt}
\caption{Counterfactual speaker-routing ablation with Qwen3-8B. The
controller-selected action is held fixed and only the speaker-selection policy
is changed. Lower objective change is better, while lower normalized
regret indicates routing decisions closer to the counterfactual optimum.}
\label{tab:speaker-routing-ablation}
\resizebox{\columnwidth}{!}{
\begin{tabular}{lrr}
\toprule
\rowcolor{tblhead}
\textbf{Routing policy }&
\textbf{Mean \(\Delta J\) \(\downarrow\)} &
\textbf{Normalized regret \(\downarrow\)} \\
\midrule
\textbf{Consilience (action-conditional)}
    & \textbf{-0.0079} & 0.4868 \\
Random
    & 0.0145 & 0.4930 \\
Round-robin
    & -0.0017 & \textbf{0.4839} \\
Most-disagreeing
    & 0.1205 & 0.5735 \\
Most-spoken
    & 0.2815 & 0.7184 \\
\bottomrule
\end{tabular}
}
\end{table}

The rule-based controller terminates substantially earlier, after only 3.24
rounds on average, but its success decreases to 64.50\%. The LLM+Voting variant
also conducts shorter discussions, averaging 5.43 rounds, yet attains the
lowest success rate of 63.00\%. These results reveal a clear trade-off between
communication length and task performance: controllers that allow longer
deliberation are substantially more successful, whereas aggressive or adaptive
termination can stop the discussion before complementary private evidence has
been sufficiently integrated. Overall, conformal sampling preserves comparable
performance for the LLM and MLP controllers, while the weaker results of the
early-terminating variants suggest that calibrated action selection alone does
not compensate for premature voting.

\begin{table}[H]
\centering
\small
\caption{Comparison of controller methods over 400 trials. Using Qwen3-8B model.}
\label{tab:controller-comparison}
\begin{tabular}{lcc}
\toprule
\rowcolor{tblhead}
\textbf{Method} & \textbf{Mean Success (\%)} & \textbf{Mean Rounds/Trial} \\
\midrule
LLM         & \textbf{82.75} & 9.54 \\
MLP         & \textbf{82.75} & 9.87 \\
Rules       & 64.50 & 3.24 \\
LLM+Voting  & 63.00 & 5.43 \\
\bottomrule
\end{tabular}
\end{table}

    \onecolumn
\section{Complete Prompt Templates}
\label{app:prompts}

This section reports the complete prompt templates used in our
experiments. Text enclosed in angle brackets denotes a runtime
placeholder populated separately for each task or deliberation state.

\begin{promptbox}{Prompt for initial belief estimation}
You are <AGENT_NAME>, one participant in a multi-agent group decision task.

Task description:
<TASK_DESCRIPTION>

Possible answers:
<POSSIBLE_ANSWERS>

Shared information visible to everyone:
- <SHARED_INFORMATION_ITEM_1>
- <SHARED_INFORMATION_ITEM_2>
- ...

Information visible only to you:
- <PRIVATE_INFORMATION_ITEM_1>
- <PRIVATE_INFORMATION_ITEM_2>
- ...

Rules:
- You may use the information visible only to you.
- Share decision-relevant evidence when it helps the group.
- Do not invent facts.
- Be concise.

Public transcript so far:
(empty transcript)

Give your current belief over the possible answers. Return JSON only:
{
  "belief": {"OPTION": 0.5},
  "best_answer": "one option exactly as written",
  "confidence": 0.0,
  "one_sentence_evidence": "short evidence summary",
  "needs_more_information": true
}

Include every possible answer exactly and make the probabilities sum to 1.
Possible answers: <POSSIBLE_ANSWERS>
\end{promptbox}

\begin{promptbox}{Prompt for speaker response generation}
You are <AGENT_NAME>, one participant in a multi-agent group decision task.

Task description:
<TASK_DESCRIPTION>

Possible answers:
<POSSIBLE_ANSWERS>

Shared information visible to everyone:
- <SHARED_INFORMATION_ITEM_1>
- <SHARED_INFORMATION_ITEM_2>
- ...

Information visible only to you:
- <PRIVATE_INFORMATION_ITEM_1>
- <PRIVATE_INFORMATION_ITEM_2>
- ...

Rules:
- You may use the information visible only to you.
- Share decision-relevant evidence when it helps the group.
- Do not invent facts.
- Be concise.

Public transcript so far:
<PUBLIC_TRANSCRIPT>

Controller instruction:
<ACTION_SPECIFIC_INSTRUCTION>

Write your next message to the group. Share relevant evidence, clearly state
if it rules out an option, communicate naturally, and do not refer to it as
private information.
\end{promptbox}

\begin{promptbox}{Controller instruction for \textsc{Challenge}}
The current leading option is <LEADING_OPTION>. State any evidence that could
support or challenge <LEADING_OPTION>. If your information rules out another
option, say so clearly.
\end{promptbox}

\begin{promptbox}{Controller instruction for \textsc{ClarifyDisagreement}}
Your belief differs from the group. Explain the evidence behind your view,
especially any fact that rules out an option.
\end{promptbox}

\begin{promptbox}{Controller instruction for \textsc{SeekNewEvidence}}
Share one new decision-relevant fact. Focus on facts that eliminate or support an option. Avoid repeating the transcript.
\end{promptbox}

\begin{promptbox}{Controller instruction for \textsc{Route}}
Contribute one concise piece of decision-relevant evidence. If possible,
explain which option it supports or rules out.
\end{promptbox}

\begin{promptbox}{Prompt for LLM controller action selection}
You are the consilience controller. Choose the next controller action for a
multi-agent decision task. You must choose exactly one existing action type; do not
choose a speaker and do not write the agent's message.

Available actions:
- challenge: ask an agent to test the current leading option and surface contrary evidence.
- clarify_disagreement: ask the agent whose belief differs most from the group to explain why.
- seek_new_evidence: ask the least-heard agent for one new decision-relevant fact.
- route: ask the least-heard agent for concise evidence that supports or rules out an option.

Task:
<TASK_DESCRIPTION>

Possible answers:
<POSSIBLE_ANSWERS>

Public transcript so far:
<PUBLIC_TRANSCRIPT>

Current state:
<COLLECTIVE_STATE_JSON>

Current round:
<CURRENT_ROUND>

Current agent beliefs:
<AGENT_BELIEFS_JSON>

Times each agent has spoken:
<SPEAKER_COUNTS_JSON>

Return JSON only:
{
  "action": "one of: challenge, clarify_disagreement, seek_new_evidence, route",
  "reason": "brief reason for choosing this action",
  "expected_effect": "brief description of what this should improve"
}
\end{promptbox}

\begin{promptbox}{Prompt for LLM controller with adaptive voting}
You are the consilience controller. Choose the next controller action for a
multi-agent decision task. You must choose exactly one existing action type; do not
choose a speaker and do not write the agent's message.

Available actions:
- challenge: ask an agent to test the current leading option and surface contrary evidence.
- clarify_disagreement: ask the agent whose belief differs most from the group to explain why.
- seek_new_evidence: ask the least-heard agent for one new decision-relevant fact.
- route: ask the least-heard agent for concise evidence that supports or rules out an option.
- vote: end the discussion now and collect every agent's final vote.

Task:
<TASK_DESCRIPTION>

Possible answers:
<POSSIBLE_ANSWERS>

Public transcript so far:
<PUBLIC_TRANSCRIPT>

Current state:
<COLLECTIVE_STATE_JSON>

Current round:
<CURRENT_ROUND>

Current agent beliefs:
<AGENT_BELIEFS_JSON>

Times each agent has spoken:
<SPEAKER_COUNTS_JSON>

Return JSON only:
{
  "action": "one of: challenge, clarify_disagreement, seek_new_evidence, route, vote",
  "reason": "brief reason for choosing this action",
  "expected_effect": "brief description of what this should improve"
}
\end{promptbox}

\begin{promptbox}{Prompt for message-level evidence and redundancy evaluation}
Analyze a message in a multi-agent decision conversation.

Options: <POSSIBLE_ANSWERS>
Transcript before this message:
<PUBLIC_TRANSCRIPT_BEFORE_MESSAGE>

New message:
<NEW_MESSAGE>

Return JSON only:
{
  "new_evidence_score": 0.0,
  "redundancy_score": 0.0,
  "supports_options": ["option names exactly"],
  "attacks_or_rules_out_options": ["option names exactly"],
  "is_clarification": false,
  "short_explanation": "one sentence"
}

Use scores between 0 and 1. New evidence adds task-relevant information; redundancy
means the message mostly repeats the earlier transcript.
\end{promptbox}

\subsection{Synthetic Benchmark Generation Prompts}
\label{app:synthetic-benchmark-prompts}

The additional travel-planning benchmark was generated using structured
prompting followed by deterministic validation, automatic repair, and
LLM-based semantic quality review. Text enclosed in angle brackets denotes
a value populated at runtime.

\begin{promptbox}{System prompt for synthetic task generation}
You design adversarial, realistic multi-agent travel-planning
benchmarks. Return exactly one valid JSON object and no markdown. Never reveal or hint at the
correct answer outside the declared correct_answer field. Use only the schema and operators
specified by the user.
\end{promptbox}

\begin{promptbox}{User prompt for synthetic task generation}
Create one HiddenBench-style group travel decision task.

Variation seed: <VARIATION_SEED>
Dataset position: <TASK_INDEX>
Avoid duplicating these accepted task signatures: <ACCEPTED_TASK_SIGNATURES>
Use exactly <NUMBER_OF_TRAVELERS> travelers.
Make option <TARGET_CORRECT_OPTION> the declared correct answer.
Use HiddenBench-style weighted preference aggregation:
- satisfied must = +2
- satisfied prefer = +1
- matched avoid = -1
- matched reject = -2
The declared answer must be the unique highest-scoring option. It may violate a strong preference
if it remains the best group compromise. Distractors do not need artificial, unique violation
patterns.

The task must:
- involve exactly 3 or 4 travelers with plausible roles and a concrete group-trip context;
- expose exactly four fully specified candidate plans (A, B, C, D) in public information;
- distribute private structured preferences across travelers using all tiers: must, prefer,
  avoid, reject;
- make the declared answer uniquely best under the validation profile above;
- require evidence from multiple travelers to identify the answer;
- remain solvable by literal comparison only: no outside travel facts, arithmetic ambiguity,
  subjective interpretation, compromise, or unstated assumptions;
- make every candidate feasible in ordinary real-world terms;
- give every traveler at least one preference that matches some but not all candidates;
- avoid unsafe, discriminatory, medically implausible, or internally contradictory content.

Tier semantics:
- must: matching is a strong positive preference worth +2.
- reject: matching is a strong negative preference worth -2.
- prefer: matching adds +1 utility.
- avoid: matching subtracts 1 utility.

Allowed operators: eq, neq, in, not_in, contains, contains_all, contains_any, excludes, lte, gte.
For contains/excludes operators, candidate attributes must be arrays. `in` means a scalar or any
array item belongs to the preference value list; `not_in` means no scalar/array item belongs to
that list. For lte/gte, attributes must be numbers. Every preference must reference an attribute
present in every option.

Every option attributes object must include at least these exact keys:
transport_mode, departure_window, duration_days, price_per_person_usd, hotel_category,
room_arrangement, step_free, daily_active_hours, max_walk_km_per_day, meal_options, activities.
You may add a small number of concrete keys such as arrival_time, layovers, or neighborhood.
All four options must use the same keys and comparable units.

Return this exact shape:
{
  "name": "short_unique_snake_case_name",
  "description": "shared scenario, dates or season, origin, destination, group relationship",
  "public_information": [
    "2-5 public facts that do not encode any preference or answer"
  ],
  "options": [
    {
      "id": "A",
      "title": "concise plan label",
      "attributes": {
        "transport_mode": "train",
        "departure_window": "morning",
        "duration_days": 4,
        "price_per_person_usd": 900,
        "hotel_category": "midrange",
        "room_arrangement": "two twin rooms",
        "step_free": true,
        "daily_active_hours": 7,
        "max_walk_km_per_day": 5,
        "meal_options": ["vegetarian", "seafood"],
        "activities": ["museum", "market"]
      }
    }
  ],
  "agents": [
    {
      "id": "traveler_1",
      "role": "realistic role in this group",
      "context": "one sentence explaining why the preferences are plausible",
      "preferences": [
        {
          "id": "t1_must_1",
          "tier": "must",
          "attribute": "step_free",
          "operator": "eq",
          "value": true,
          "statement": "A natural first-person statement with clear non-negotiable force."
        },
        {
          "id": "t1_reject_1",
          "tier": "reject",
          "attribute": "transport_mode",
          "operator": "eq",
          "value": "flight",
          "statement": "A natural first-person statement clearly refusing flights."
        }
      ]
    }
  ],
  "correct_answer": "one of A/B/C/D"
}

Use 2-4 preferences per traveler. Individual travelers may have only hard or only soft items, but
the complete task must use each of must, prefer, avoid, and reject. NEVER omit the "tier" key:
copy its lowercase value from the preference id. Before responding, evaluate every predicate
against A/B/C/D, calculate the aggregate score for all options, and verify that each traveler has
at least one predicate matching 1-3 options. It is acceptable for an individual traveler to rank
the declared answer first; at least two profiles must still contribute non-constant evidence to
the aggregate decision. The statement must faithfully express its predicate and tier. Do not add
evaluation, rationale, violation summaries, or extra top-level fields.
\end{promptbox}

\begin{promptbox}{Failure-conditioned regeneration instruction}
A previous blueprint was abandoned after failed repairs. Create a genuinely new blueprint and
avoid these failures:
<PRIOR_VALIDATION_ISSUES>
\end{promptbox}

\subsection{Deterministic Validation and Repair}
\label{app:synthetic-validation-repair}

Generated tasks are first evaluated by a deterministic validator. Under the
HiddenBench validation profile, a matched \texttt{must} contributes \(+2\),
a matched \texttt{prefer} contributes \(+1\), a matched \texttt{avoid}
contributes \(-1\), and a matched \texttt{reject} contributes \(-2\).
The declared answer must be the unique highest-scoring candidate. Instances
that fail validation are repaired using the following prompt.

\begin{promptbox}{Prompt for repairing a rejected synthetic task}
Repair the rejected HiddenBench travel blueprint below in place.

Dataset position: <TASK_INDEX>
Do not duplicate accepted signatures: <ACCEPTED_TASK_SIGNATURES>

Change the minimum option attributes or preferences needed to fix every validator issue. Preserve
the scenario, traveler roles, four option ids, and all already-valid content. Return the complete
repaired blueprint as one JSON object, not a patch.

For an "expected exactly one hard-feasible option, found []" error in strict mode, repair in this order:
1. Read the detailed violation list for the declared correct_answer.
2. Change its public attributes, or the conflicting preference predicate and matching statement,
   until it violates zero MUST/REJECT preferences.
3. Re-evaluate every hard predicate literally against all four options.
4. Ensure each other option retains a different non-empty violation set.
Do not merely repeat the rejected blueprint.

Required invariants after repair:
- exactly four comparable plans and exactly 3 or 4 travelers;
- every preference contains id, tier, attribute, operator, value, and statement;
- tier is exactly one lowercase value: must, prefer, avoid, or reject;
- each traveler has at least two preferences and at least one predicate that matches some but not
  all plans;
- the task collectively uses all four tiers;
- using must=+2, prefer=+1, avoid=-1, reject=-2, correct_answer is the unique
  highest-scoring option;
- an individual traveler may rank correct_answer first, but at least two profiles must contribute
  non-constant option scores;
- plans may violate MUST or REJECT items; do not force a fully hard-feasible plan;
- every preference statement exactly matches its structured predicate.

Validator issues:
<VALIDATOR_ISSUES_JSON>

Rejected blueprint:
<REJECTED_BLUEPRINT_JSON>
\end{promptbox}

\begin{promptbox}{System prompt for synthetic-task quality review}
You are a strict dataset reviewer. Check a travel hidden-profile
benchmark for semantic realism and consistency. Return exactly one valid JSON object with keys
accepted (boolean), issues (array of concise strings), and checks (object). Do not repair the task.
\end{promptbox}

\begin{promptbox}{User prompt for synthetic-task quality review}
Review this candidate benchmark after deterministic validation.

Reject it if any preference statement disagrees with its structured predicate; a plan attribute
is vague, incomparable, or implausible; a private fact leaks into public text; the declared answer
is hinted by names/order/wording; the scenario needs outside knowledge; constraints are contrived
instead of realistic; an entire traveler profile has no decision-relevant information; two options
are semantically indistinguishable; or the task could be solved from public information or one
profile alone. A realistic preference shared by all plans is allowed when that same traveler has
another preference that distinguishes plans.

Accept only if the exact four plans, realistic profiles, a unique weighted-utility answer under
must=+2, prefer=+1, avoid=-1, reject=-2, and multi-agent dependency are clear to a careful human.
Do not require every strong preference to be satisfied in HiddenBench mode; realistic compromise
plans are allowed.

The deterministic audit is authoritative for predicate evaluation, scoring, winner uniqueness,
and information dependency. Do NOT reject a task merely because any option fails a MUST, matches a
REJECT, or because one traveler locally ranks the final answer first. Do not reinterpret operators
or report option-level score violations already handled by the audit. Review only semantic
fidelity of statements to predicates, realism, ambiguity, leakage, and unsupported outside facts.

Deterministic audit:
<DETERMINISTIC_AUDIT_JSON>

Blueprint:
<CANDIDATE_BLUEPRINT_JSON>
\end{promptbox}

\end{document}